\documentclass{article}



\usepackage[preprint]{neurips_2025}



\usepackage[utf8]{inputenc} 
\usepackage[T1]{fontenc}    
\usepackage{hyperref}       
\usepackage{url}            
\usepackage{booktabs}       
\usepackage{amsfonts}       
\usepackage{nicefrac}       
\usepackage{microtype}      
\usepackage[dvipsnames,table]{xcolor}         

\setcitestyle{numbers,square}

\usepackage{algorithm}
\usepackage{algorithmic}
\usepackage{amsmath}
\usepackage{amssymb}
\usepackage{amsthm}
\usepackage{mathtools}
\usepackage{cleveref}
\usepackage{enumitem}
\usepackage{caption}
\usepackage{subcaption}
\usepackage{array}
\usepackage{graphics}
\usepackage{tikz}
\usetikzlibrary{arrows.meta,positioning,calc,trees,shapes.geometric,backgrounds,matrix}
\usepackage{microtype}
\usepackage{wrapfig}
\usepackage{scalefnt}
\usepackage{sidecap}
\usepackage[bottom]{footmisc}
\usepackage{tcolorbox}
\usepackage[export]{adjustbox}
\usepackage{placeins}

\tcbuselibrary{skins,breakable,hooks}
\usepackage{titlesec}
\titlespacing*{\paragraph}{0pt}{0pt}{1em}

\setlist[itemize]{nosep,   
                  topsep=0pt,
                  partopsep=0pt,
                  leftmargin=1.8em}

\newtheorem{proposition}{Proposition}
\DeclareMathOperator*{\argmax}{argmax}

\newcommand{\bx}{\mathbf{x}}
\newcommand{\bs}{\mathbf{s}}
\newcommand{\by}{\mathbf{y}}
\newcommand{\dtsa}{\textsc{\scalefont{1.2}dts}\,}
\newcommand{\dtse}{\textsc{\scalefont{1.2}dts$^\star$}\,}
\newcommand{\valunc}[2]{%
  \textnormal{#1}%
  {\scriptsize\,\,$\pm$\,\textnormal{#2}}%
}

\definecolor{highlight}{RGB}{225,230,255}
\newcommand{\highlight}[1]{\cellcolor{blue!10}{#1}}
\newtcolorbox{expblock}[2][]
{%
  enhanced,
  colback        = gray!2,          
  colframe       = gray!65!black,   
  coltitle       = white,           
  boxed title style = {             
    colback = gray!14,
    size    = small,
    boxrule = 0pt,                  
  },
  arc           = 1.6mm,            
  boxrule       = .4pt,             
  left          = 1.8mm,
  right         = 1.8mm,
  top           = 1.2mm,
  bottom        = 1.2mm,
  fonttitle     = \bfseries\normalsize,
  title         = {#2},
  #1
}

\makeatletter

\renewcommand{\appendixautorefname}{\S\@gobble}
\renewcommand{\sectionautorefname}{\S\@gobble}
\renewcommand{\subsectionautorefname}{\S\@gobble}
\renewcommand{\subsubsectionautorefname}{\S\@gobble}
\makeatother
\makeatletter
\providecommand{\section}{}
\renewcommand{\section}{%
  \@startsection{section}{1}{\z@}%
                {-1.5ex \@plus -0.5ex \@minus -0.2ex}%
                { 1.0ex \@plus  0.3ex \@minus  0.2ex}%
                {\large\bf\raggedright}%
}
\providecommand{\subsection}{}
\renewcommand{\subsection}{%
  \@startsection{subsection}{2}{\z@}%
                {-1.3ex \@plus -0.5ex \@minus -0.2ex}%
                { 0.3ex \@plus  0.2ex}%
                {\normalsize\bf\raggedright}%
}
\providecommand{\subsubsection}{}
\renewcommand{\subsubsection}{%
  \@startsection{subsubsection}{3}{\z@}%
                {-1.0ex \@plus -0.5ex \@minus -0.2ex}%
                { 0.3ex \@plus  0.2ex}%
                {\normalsize\bf\raggedright}%
}
\providecommand{\paragraph}{}
\renewcommand{\paragraph}{%
  \@startsection{paragraph}{4}{\z@}%
                {0.2ex \@plus 0.2ex \@minus 0.2ex}%
                {-1em}%
                {\normalsize\bf}%
}
\providecommand{\subparagraph}{}
\renewcommand{\subparagraph}{%
  \@startsection{subparagraph}{5}{\z@}%
                {1.5ex \@plus 0.5ex \@minus 0.2ex}%
                {-1em}%
                {\normalsize\bf}%
}

\makeatother
\linespread{0.985}
\addtolength{\parskip}{-0.2mm}

\title{Diffusion Tree Sampling: Scalable inference‑time alignment of diffusion models}
%

\author{%
    Vineet Jain,\; Kusha Sareen,\; Mohammad Pedramfar,\; Siamak Ravanbakhsh\\
    School of Computer Science, McGill University\\
    Mila - Quebec Artificial Intelligence Institute\\
    \texttt{\{jain.vineet, siamak.ravanbakhsh\}@mila.quebec}
}

\begin{document}

\maketitle

\begin{abstract}
Adapting a pretrained diffusion model to new objectives at inference time remains an open problem in generative modeling. Existing steering methods suffer from inaccurate value estimation, especially at high noise levels, which biases guidance. Moreover, information from past runs is not reused to improve sample quality, resulting in inefficient use of compute. Inspired by the success of Monte Carlo Tree Search, we address these limitations by casting inference-time alignment as a search problem that reuses past computations. We introduce a tree-based approach that \emph{samples} from the reward-aligned target density by propagating terminal rewards back through the diffusion chain and iteratively refining value estimates with each additional generation. Our proposed method, Diffusion Tree Sampling (\dtsa\!), produces asymptotically exact samples from the target distribution in the limit of infinite rollouts, and its greedy variant, Diffusion Tree Search (\dtse\!), performs a global search for high reward samples. On MNIST and CIFAR-10 class-conditional generation, \dtsa matches the FID of the best-performing baseline with up to $10\times$ less compute. In text-to-image generation and language completion tasks, \dtse effectively searches for high reward samples that match best-of-N with up to $5\times$ less compute. By reusing information from previous generations, we get an \emph{anytime algorithm} that turns additional compute into steadily better samples, providing a scalable approach for inference-time alignment of diffusion models\footnote{Project page: \href{https://diffusion-tree-sampling.github.io}{\texttt{https://diffusion-tree-sampling.github.io}}}.
\end{abstract}

\section{Introduction}
\label{sec:intro}

\begin{figure}[t!]
    \centering
    \begin{subfigure}[b]{.99\textwidth}
        \centering
        \includegraphics[width=\textwidth,trim={6em 2.5em 1.5em 9.25em},clip]{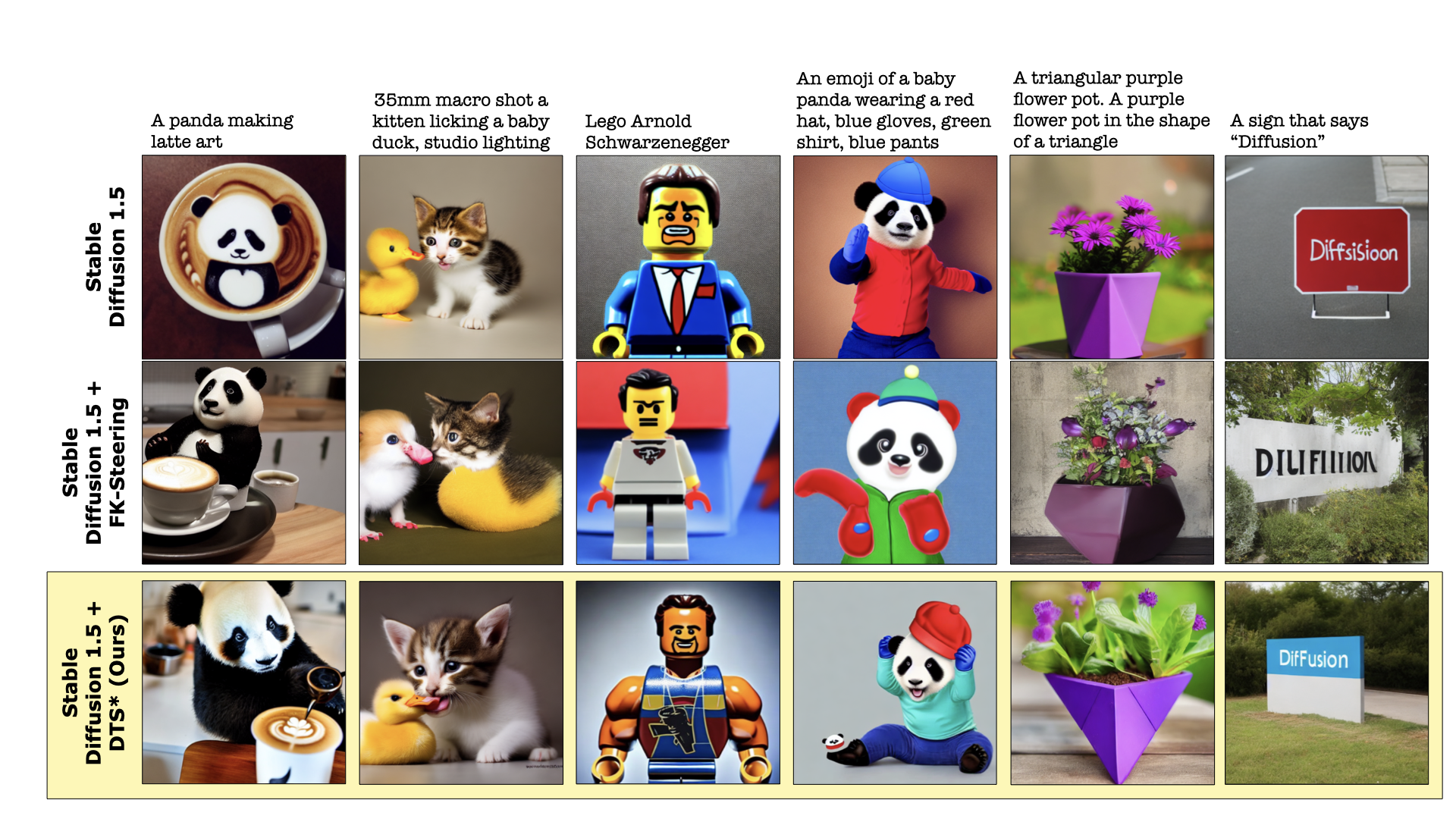}
        \label{fig:sd_images}
    \end{subfigure}
    \vspace{-0.5em}
    \caption{Sample text-image pairs using Stable Diffusion v1.5 \citep{rombach2022high} and ImageReward \citep{xu2023imagereward} as the guiding function, with generated samples picked at random for each method and prompt.}
    \vspace{-1em}
    \label{fig:main_fig}
\end{figure}

Diffusion models have emerged as one of the most powerful frameworks for generative modeling, achieving state-of-the-art results across a wide range of modalities, including image synthesis \citep{ho2020denoising, song2021scorebased, rombach2022high}, molecule conformer generation \citep{hoogeboom2022equivariant,xugeo2022diff}, and text generation \citep{sahoo2024simpleeffectivemaskeddiffusion,lou2024discrete}. 
Despite their success, adapting a pretrained diffusion model to satisfy new, user-defined objectives at inference time without expensive retraining or fine-tuning remains a major challenge \citep{uehara2025inference}.

Most objectives can be cast as a reward function, turning alignment into a posterior sampling problem where the target is to sample from the pretrained model density weighted by exponentiated reward. The key challenge is that rewards are only available at the end of the denoising trajectory. So, inference-time alignment seeks to \emph{guide the denoising process based on unseen terminal rewards}. 

A range of different methods have been proposed -- gradient-based guidance \citep{dhariwal2021diffusion, chung2023dps, bansal2023universal}, where one uses reward gradients to bias the denoising process; sequential Monte Carlo (SMC) \citep{wu2023practical,trippe2023diffusion,cardoso2024monte,dou2024diffusion,kim2025test} which maintains a population of particles and resamples them during denoising based on an estimate of terminal rewards; or more recently, search-based methods \citep{li2024derivative, li2025dynamic, ma2025inference} that perform a local greedy search based on approximate rewards. 
The common issue undermining all of these methods is that they rely on certain approximations to estimate the unseen terminal rewards. As we demonstrate in \Cref{sec:pitfalls}, these approximations bias decisions and degrade sample quality.

We therefore address the following challenges or questions in this work: (1) how to guide the diffusion process at inference-time when rewards are available only at the end? This is also known as the credit assignment problem in reinforcement learning (RL) literature \citep{minsky1961steps}; (2) inference-time samples can potentially inform and improve future samples -- how to systematically use this information in a sequential yet scalable sampling process?

\begin{wrapfigure}{r}{0.45\textwidth}
\vspace{-1.2em}
\centering
\includegraphics[width=\linewidth,trim={2em 1em 3em 2em},clip]{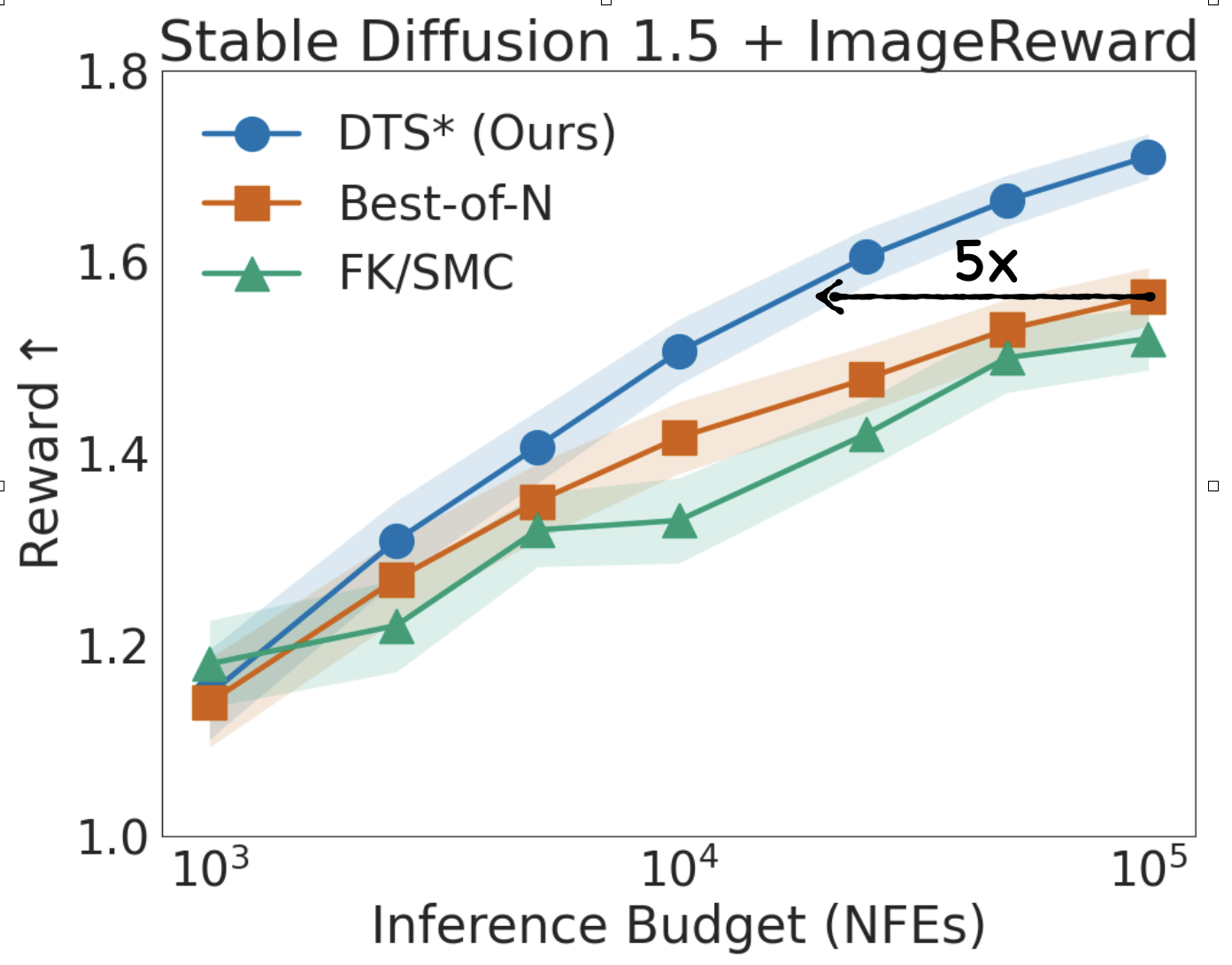}
\vspace{-1.2em}
\caption{Scaling inference compute.}
\vspace{-1.5em}
\label{fig:intro_plot}
\end{wrapfigure}

Fortunately, RL also provides a solution that has been historically quite successful in addressing both challenges -- Monte Carlo Tree Search (MCTS) \citep{browne2012survey}. We therefore ask: can we leverage MCTS for steering diffusion models?
We observe that during denoising, the pretrained diffusion model can be viewed as a deterministic policy, while the reverse process Gaussian step can be viewed as a stochastic environment transition. This is exactly the classical setting for MCTS \cite{kocsis2006bandit}, suggesting we could use tree search to solve the problem of inference-time alignment.


Our proposed algorithm, Diffusion Tree Sampling (\dtsa\!) is a novel inference-time alignment method that casts the denoising process as a finite-horizon tree, where similar to MCTS, rollouts are used to continuously improve value estimates for intermediate noisy states. For applications that require optimization, rather than sampling from the target density, we propose a search variant -- Diffusion Tree Search (\dtse\!) -- that performs a principled search in the space of denoising trajectories to identify the modes within high-volume regions of the target density.

Our contributions can be summarized as follows:
\begin{itemize}[leftmargin=1.8em,itemsep=0.3em]
\item We formulate inference-time alignment of diffusion models as a tree search problem for sampling from the reward-aligned distribution or optimizing for high reward samples. 
\item We develop a general tree-based algorithm that yields asymptotically exact samples from the target distribution in the limit of infinite rollouts.
\item We demonstrate that \dtsa significantly reduces bias and variance in value estimation compared to common approximations used by many existing methods.
\item We show that both \dtsa and \dtse scale more favorably compared to leading baselines and match their performance with up to $10\times$ less compute on class-conditional image generation, and up to $5\times$ less compute on text-to-image alignment and language completion tasks.
\end{itemize}

\FloatBarrier
\clearpage
The paper is organized as follows. \Cref{sec:prelim} reviews diffusion models and the formalism of inference-time alignment. \Cref{sec:pitfalls} discusses relevant literature and investigates the issues with existing approaches. Section~\ref{sec:dts} presents the \dtsa and \dtse algorithms, along with theoretical guarantees of consistency. Section~\ref{sec:exp} provides empirical results in various high-dimensional settings. We conclude in Section~\ref{sec:discussion} with a discussion of limitations, computational considerations, and future directions.

\section{Preliminaries}
\label{sec:prelim}

\paragraph{Diffusion models. }
\label{sec:prelim_1}
Diffusion models \citep{ho2020denoising, song2021scorebased} define a generative process via a Markov chain that progressively adds noise to data $\bx_0 \sim p_{\text{data}}(\bx)$, referred to as the forward process,
\begin{align*}
\bx_t = \sqrt{\alpha_t}\, \bx_{t-1} + \sqrt{1 - \alpha_t}\, \epsilon, \quad \epsilon \sim \mathcal{N}(0, I),
\end{align*}
where $t \in \{1,\dots,T\}$ indexes discrete time steps, and $\{\alpha_t\}_{t=1}^T$ defines a noise schedule. The noise schedule is chosen such that at $t=T$, the marginal distribution of the samples resembles a simple fixed distribution, such as standard Gaussian, $p(\bx_T) = \mathcal{N}(0, I)$. A learned reverse process iteratively denoises samples:
\begin{align*}
p_\theta(\bx_{t-1} \mid \bx_t) = \mathcal{N}(\bx_{t-1}; \mu_\theta(\bx_t,t), \sigma_t^2 I),
\end{align*}
where $\sigma_t$ is the posterior variance calculated from the forward noise schedule and $\mu_\theta$ is parameterized typically by neural networks and optimized by minimizing the variational bound on the data likelihood or via denoising score matching. The generative process induces a distribution:
\begin{align}
    p_\theta(\bx_0,\ldots,\bx_{T-1},\bx_T) = p(\bx_T) \prod_{t=1}^T p_\theta(\bx_{t-1} \mid \bx_t), \quad p(\bx_T) = \mathcal{N}(0, I).
\end{align}

\paragraph{Alignment of diffusion models.}
\label{sec:prelim_2}

Consider a pretrained diffusion model $p_\theta$ and an optimality variable $\mathcal{O} \in \{0,1\}$ which denotes whether a sample $\bx \sim p_\theta(\bx)$ satisfies some desirable property. This setting naturally emerges in domains such as molecular design, where we might want samples to have specific chemical properties, or text-to-image generation, where samples should match text prompts \citep{rombach2022high}. 
This is equivalent to sampling from the posterior distribution 
\begin{align*}
    \mbox{$p(\bx \mid \mathcal{O} = 1) \propto p_\theta(\bx) \,p(\mathcal{O} = 1 \mid \bx)$}.
\end{align*}
A typical assumption is that $p(\mathcal{O} = 1 \mid \bx) \propto \exp(\lambda r(\bx))$ where $r$ is some reward function and $\lambda$ is the inverse temperature. For the rest of this paper, we assume that $\lambda=1$ unless otherwise stated and we define the alignment problem as sampling from the target distribution or finding its mode, where ${Z}$ is the normalization constant:
\begin{align}
    \pi^*(\bx) = \frac{1}{{Z}}\; p_\theta(\bx) \exp(\lambda r(\bx)). \label{eq:product}
\end{align}

\paragraph{Reinforcement learning approach.}
\label{sec:prelim_3}
Since the generative process in diffusion models defines a Markov chain, we may consider the model $p_\theta$ as a policy. The target distribution $\pi^*$ can be seen as the optimal policy for the following objective:
\begin{align}
    \pi^*(\bx) \coloneqq \argmax_\pi \mathbb{E}_{\bx \sim \pi(\cdot)}\left[r(\bx)\right] - \frac{1}{\lambda} \,D_\textrm{KL}\left(\pi \,\|\, p_\theta\right).
\end{align}
This is closely related to the maximum entropy RL objective \citep{ziebart2008maximum, haarnoja2017reinforcement},
except that the entropy regularization (which equals the KL divergence with respect to a uniform policy) is replaced by the KL divergence with the pretrained model $p_\theta$.
We define the \textit{soft value function} at timestep $t$ as the expected exponentiated reward starting from $\bx_t$ and following $p_\theta$:
\begin{align}
\label{eq:soft_value}
V_t(\bx_t) &\coloneqq \frac{1}{\lambda}\log \mathbb{E}_{p_\theta(\bx_{0:t-1}\mid\bx_t)}\left[\exp\left(\lambda r(\bx_0)\right)\right].
\end{align}
This soft value function satisfies the following recursive relation, analogous to the soft Bellman equation and exactly characterizes the optimal policy $\pi^*$:
\begin{align} 
V_t(\bx_t) &= \frac{1}{\lambda}\log \mathbb{E}_{p_\theta(\bx_{t-1}\mid\bx_t)}\left[\exp\left(\lambda V_{t-1}(\bx_{t-1})\right)\right], \quad V_0(\bx_0) = r(\bx_0).
\label{eq:soft_bellman}
\end{align}
\vspace{-1em}
\begin{align} 
\pi_t^*(\bx_{t-1}\mid\bx_t) &= \frac{p_\theta(\bx_{t-1}\mid\bx_t)\exp\left(\lambda V_{t-1}(\bx_{t-1})\right)}{\int p_\theta(\bx_{t-1}\mid\bx_t)\exp\left(\lambda V_{t-1}(\bx_{t-1})\right) d\bx_{t-1}}.
\label{eq:optimal_policy}
\end{align}
This formulation explicitly connects optimal sampling with soft value estimation, motivating various practical approximations and sampling methods discussed in subsequent sections. For completeness, we derive \Cref{eq:soft_bellman,eq:optimal_policy} in \Cref{app:proof_1}. In the rest of the paper, we use $V_t$ to denote the true soft value function and $\hat{v}_t$ to denote estimates.

\section{Inference-time adaptation of diffusion models}
\label{sec:pitfalls}

One option to obtain the optimal policy in \Cref{eq:optimal_policy} is to train the diffusion model using RL, which is called fine-tuning \citep{fan2023dpok,black2024training, venkatraman2024amortizing,domingo2024adjoint}. This idea is not compatible with a priori unknown reward functions presented at inference-time. Sampling from unseen reward functions would require guiding the denoising process during inference to align with the optimal policy without modifying the prior pretrained model. There is a growing body of work in this direction, we discuss some of the most relevant works below, and works in related areas in \Cref{app:related}.


\paragraph{Gradient-based guidance.} 
One way to sample from the optimal policy $\pi^*$ is to use the first-order Taylor expansion of $V_{t-1}$ around the pretrained mean $\mu_{\theta}(\bx_{t},t)$.
This yields the gradient-based denoising step $\tilde{\bx}_{t-1} \!\sim\! \mathcal{N}\left(\mu_{\theta}(\bx_{t},t)+\lambda\,\sigma_{t}^{2}\,\nabla_{\bx_{t-1}} V_{t-1}(\bx_{t-1}), \sigma_t^2\,I\right)$.
This can be considered a form of classifier guidance \citep{dhariwal2021diffusion} and is used in many proposed inference-time steering methods \citep{chung2023dps, bansal2023universal, he2024manifold}. The gradient approximation can be improved by using Monte Carlo samples for estimation \citep{song2023loss}.

\paragraph{Sequential Monte Carlo.}
Particle-based methods are another very popular approach, where a population of samples is maintained to approximately sample from the desired distribution. Sequential Monte Carlo (SMC) \citep{del2006sequential} uses potential functions, which usually approximate the soft value function, to assign weights to particles and resample them at every step. Different variations of SMC have been proposed for diffusion model alignment \citep{wu2023practical,trippe2023diffusion,cardoso2024monte,dou2024diffusion,kim2025test}. Classical SMC guarantees exact sampling in the limit of infinite particles and exact value estimation. In practice, however, the repeated sampling procedure can reduce diversity due to weight variance and inaccurate value estimates. We provide detailed background on SMC for diffusion sampling in \Cref{app:smc_background}.

\paragraph{Search-based methods.}
Recently, there has been a growing interest in using search-based methods to align diffusion models \citep{ma2025inference}. Most of these methods propose doing a local search \citep{li2024derivative, li2025dynamic} by obtaining multiple denoising candidates at each step and selecting the best one based on their value. More recently, tree search has been combined with best-of-N \citep{zhang2025t}, and an MCTS-based approach \citep{yoon2025monte} has been applied in the specific context of diffusion forcing \citep{chen2024diffusion} over sequences for planning. However, these methods either do not use an explicit backup mechanism, resulting in a limited local search\citep{li2024derivative, li2025dynamic, zhang2025t, ma2025inference}, or they rely on inaccurate value estimates \citep{yoon2025monte}. \dtsa, on the other hand, performs global credit assignment using all trajectories for asymptotically exact sampling.

\subsection{The value estimation problem}
\label{sec:value}

\begin{figure}[b]
    \centering
    \includegraphics[width=.99\linewidth]{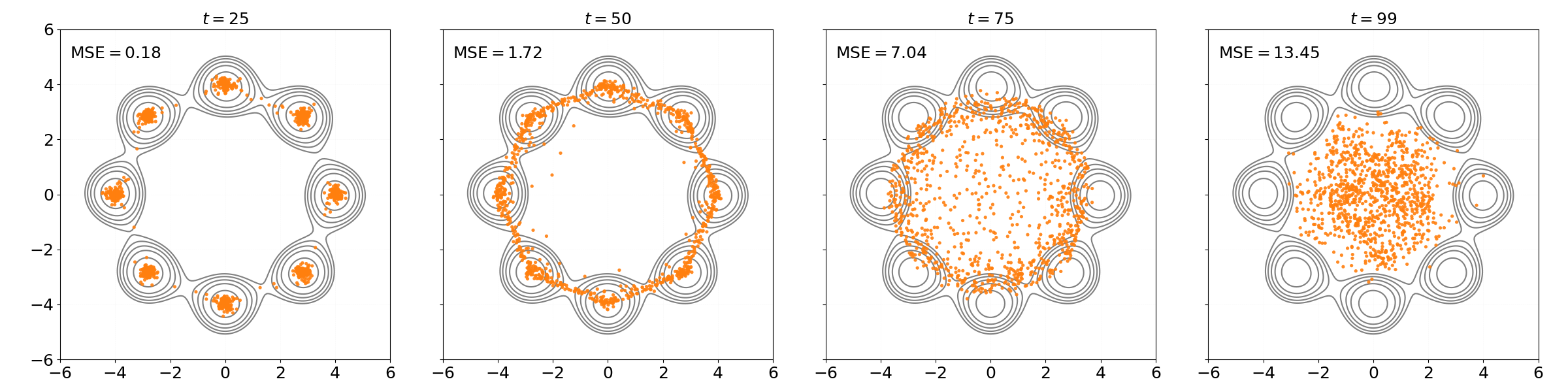}
    \caption{One-step prediction $\hat{\bx}_0(\bx_t)$ using Tweedie's formula for different time steps, along with average mean squared error with the ground truth data samples. Close to $t=0$, the predictions are fairly accurate, but towards the maximum timestep $T=99$, they devolve into random predictions.}
    \vspace{-1em}
    \label{fig:onestep_denoising}
\end{figure}

Accurate credit assignment requires estimating the value function from \Cref{eq:soft_bellman} at each intermediate state. If the reward function is known a priori, we could use temporal difference learning to train value networks. Most existing inference-time alignment methods employing gradient guidance \citep{chung2023dps,song2023loss}, SMC \citep{wu2023practical,kim2025test}, or search \citep{li2024derivative,li2025dynamic,yoon2025monte} rely on the following set of approximations to circumvent this problem. The first step is to apply Jensen's inequality:
\begin{align}
    V_t(\bx_t) = \frac{1}{\lambda}\log \mathbb{E}_{p_\theta(\bx_{0:t-1}\mid\bx_t)}\left[\exp\left(\lambda \,r(\bx_0)\right)\right] \approx \mathbb{E}_{p_\theta(\bx_{0:t-1} \mid \bx_t)}\left[r(\bx_0)\right]
\end{align}
Estimation of this expectation is computationally expensive since it requires multiple rollouts from the current sample at timestep $t$ to the clean sample. The expected reward is further approximated as $\mathbb{E}_{p_\theta(\bx_{0:t-1} \mid \bx_t)}\left[r(\bx_0)\right] \approx r(\hat{\bx}_0(\bx_t))$, where $\hat{\bx}_0$ is the posterior mean obtained using Tweedie's formula \citep{efron2011tweedie,chung2023dps} in a single step:
\begin{align}
    \hat{\bx}_0(\bx_t) = \mathbb{E}_{p_\theta(\bx_{0:t-1} \mid \bx_t)}\left[\bx_0\right] = \frac{1}{\sqrt{\alpha_t}}\,\left(\bx_t + (1 - \bar{\alpha}_t) \nabla_{\bx_t} \log p_t(\bx_t)\right).
\end{align}
The posterior mean is an approximation because the true score function for intermediate marginal densities $\nabla_{\bx_t} \log p_t(\bx_t)$ is replaced by the learned score function. We investigate the effect of this approximation for a diffusion model trained on a mixture of Gaussians in \Cref{fig:onestep_denoising}. The prediction based on Tweedie's formula is quite accurate for low time steps, but gets increasingly inaccurate as we go towards noisy states, eventually degrading into random predictions. Therefore, despite the wide adoption of this approximation, the value estimates used for guidance are essentially random at higher-noise levels even in simple 2D settings.  

\subsection{Scaling axes}
Efficient utilization of available compute is critical for any inference-time alignment algorithm. Existing SMC or search-based methods treat each sampling procedure as an independent event, and all intermediate evaluations are discarded. 
Consider a streaming or repeated sampling setting. There is no mechanism to assimilate information from prior runs to improve sample quality. This could be particularly useful for correcting errors in value estimation, which, as we saw above, are difficult to estimate at high noise levels. As a result, these methods scale along a single axis -- particle count -- and cannot turn extra compute into cumulative improvements in estimate quality. 




\section{Diffusion Tree Sampling and Search}
\label{sec:dts}

The pitfalls above suggest two complementary desiderata for an effective inference–time sampler:
\begin{enumerate}[label=(D\arabic*),leftmargin=2.8em,itemsep=0.3em]
\item Use information from low-noise timesteps, where the reward signal is reliable, to \emph{refine decisions made at high-noise timesteps}, rather than treating every step in isolation.
\item Reuse information from previously explored trajectories so that \emph{additional compute improves sample quality} instead of merely increasing parallel particle count (this property is characteristic of an anytime algorithm).
\end{enumerate}
To address these issues, in this section we develop a solution by first interpreting the denoising process as a tree in \Cref{sec:dts_tree}. 
We then introduce a general tree-based algorithm to sample from the target density in \Cref{sec:dts_sampling} 
and describe its application to diffusion model alignment in \Cref{sec:dts_design}.
Finally, in \Cref{sec:dts_exp} we empirically evaluate our method on 2D datasets to validate, in a very clear and controlled setting, whether \dtsa satisfies the desiderata mentioned above.

\subsection{Denoising tree}
\label{sec:dts_tree}
The Markov property of the reverse diffusion chain naturally induces a finite horizon tree in $\mathbb{R}^d$, where $d$ is the dimensionality of the space over which we are diffusing. Here, the nodes at depth $t$ represent noisy states $\bx_t$ and the edges represent a denoising step. Each node $\bx_t$ can be stochastically denoised into multiple children $\bx_{t-1} \sim p_\theta(\cdot \mid \bx_t)$.

This framing allows us to keep track of information across multiple denoising trajectories, including estimates of the soft value function, which helps with global credit assignment. 

Using the tree structure gives us the flexibility to sample from the target density or search for the highest reward sample with minimal changes to the underlying algorithm. We call the sampling variant \textsl{Diffusion Tree Sampling} (\dtsa\!) and the search variant \textsl{Diffusion Tree Search} (\dtse\!).

\subsection{Tree-based sampling}
\label{sec:dts_sampling}

Similar to MCTS, we construct a tree ${\mathcal{T}}$,  where nodes represent states $\bx_t$ and edges represent transitions $p_\theta(\bx_{t-1} \mid \bx_t)$ following the base diffusion model.
Each node maintains the current state and timestep $(\bx_t, t)$, an estimate of the soft value function $\hat{v}(\bx_t)$, and the visit count $N(\bx_t)$. Since we do not have a fixed starting state, we introduce a dummy state as the root $\bx_{T+1}$ that transitions to the prior in our diffusion model -- i.e., $p(\bx_T \mid \bx_{T+1}) = \mathcal{N}(\mathbf{0}, I)$. 
Additionally, we use $\mathcal{C}(\bx_t)$ to denote the set of children of node $\bx_t$. 

\begin{figure}
    \centering
    \includegraphics[width=0.9\linewidth]{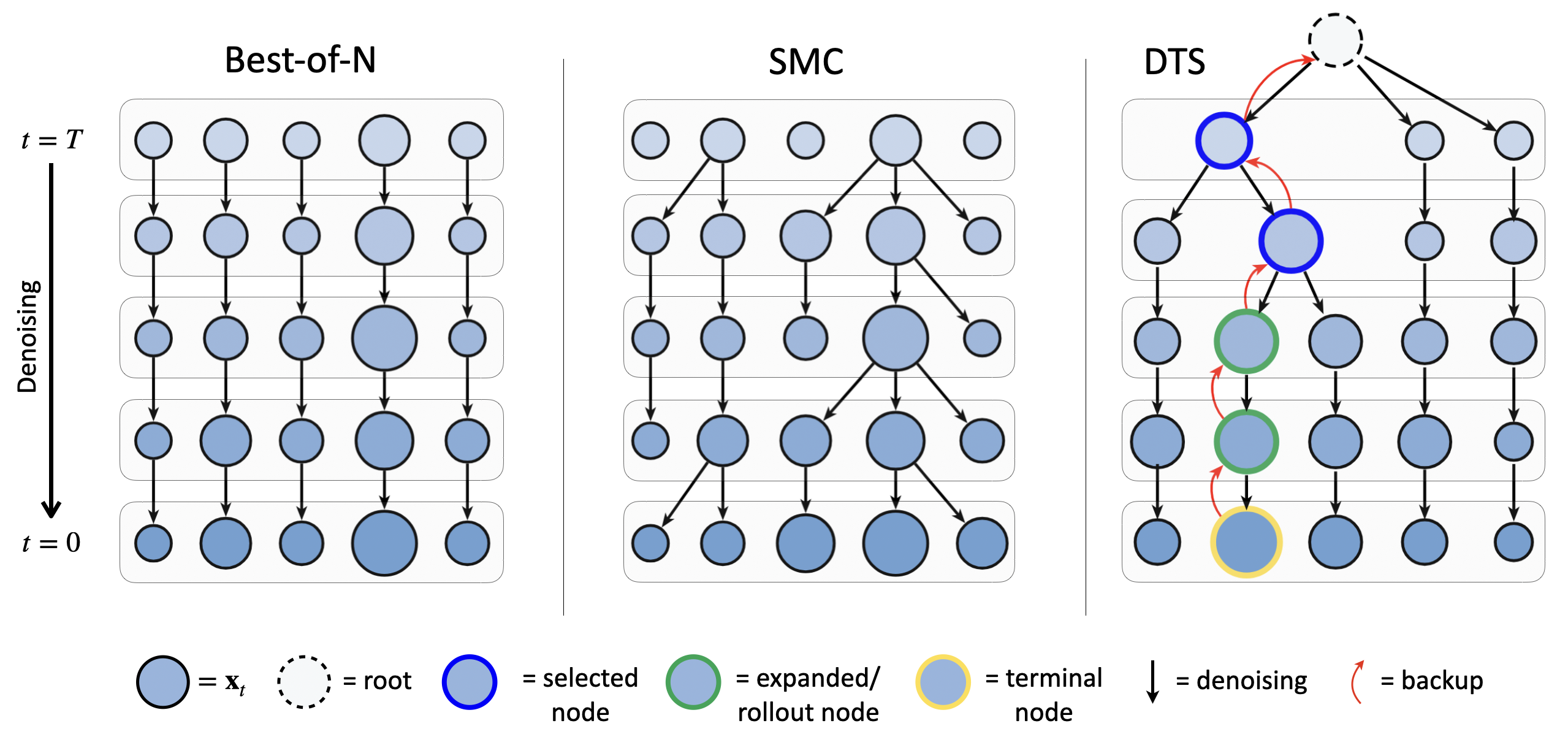}
    \caption{Illustration of various inference-time steering methods, where size of the node represents the associated values. \textbf{Left:} Best-of-N denoises multiple samples using the base diffusion model and selects the one with the highest reward. \textbf{Center:} SMC maintains a population of particles and resamples based on an estimate of the value function. \textbf{Right:} \dtsa and \dtse maintain a tree that accumulates information across multiple rollouts and backs up the terminal reward to refine value estimates. The diagram illustrates the four phases: selection, expansion, rollout, and backup.}
    \vspace{-1em}
    \label{fig:schematic}
\end{figure}

The goal is to expand this tree while improving value estimates as we expand it, so that it can be used for approximate sampling from the target distribution at any time during the construction. The resulting tree sampling process provably samples from the target distribution $\pi^*$ in the limit of infinite rollouts.
The tree-building procedure of \dtsa repeats the following steps iteratively:
\begin{enumerate}[leftmargin=1.8em,itemsep=0.3em]
    \item \textbf{Selection.}\; Starting from the root $\bx_0$, sample a child $\bx_{t-1} \in \mathcal{C}(\bx_t)$ from Boltzmann distribution of the values $\propto \exp\left(\lambda\,\hat{v}(\bx_{t-1})\right)$
    recursively until either an unexpanded node is reached or $t=0$.
    
    \item \textbf{Expansion.}\; If we reach a node $\bx_t$ such that the number of children is less than the maximum allowed value and $t>0$, we create a new child node $\bx_{t-1} \sim p_\theta(\cdot\mid\bx_t)$ and initialize $\hat{v}(\bx_{t-1}) = 0,\; N(\bx_{t-1}) = 1$.
    
    \item \textbf{Rollout.}\; From the newly created node, we perform a rollout till terminal states $\bx_0$ by recursively sampling from $p_\theta(\cdot \mid \bx_{t'})$ for $t' = t-1,\ldots,0$. An important distinction from traditional MCTS is that we add the rollout path to $\mathcal{T}$. 
    
    \item \textbf{Backup.}\; Evaluate the terminal node using the reward function $\hat{v}(\bx_0) = r(\bx_0)$ and use soft Bellman equation (\Cref{eq:soft_bellman}) to update parent node values using the children node values
    recursively for $t=0,\ldots,T$. The visit counts for all nodes in the path are also updated.
\end{enumerate}
Each traversal of the tree, from the root to the backup of the value function, constitutes one tree-building iteration. For sampling from $\mathcal{T}$, we simply start from the root and perform selection steps until we reach a terminal node. A formal algorithm is provided in \Cref{app:algo}.

\vspace{0.25em}
\begin{proposition}[Asymptotic consistency]
\label{prop:dts_proof}
Let $r$ be bounded and $\lambda > 0$, then \dtsa produces a sequence of terminal states whose empirical distribution converges to the optimal policy $\pi^\ast$ as the number of tree iterations $M\to\infty$.
\end{proposition}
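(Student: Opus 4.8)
The plan is to establish convergence in two coupled stages, following the structure of classical MCTS consistency proofs adapted to the soft/Boltzmann setting. First show that the value estimates $\hat{v}$ converge to the true soft values $V_t$ of \Cref{eq:soft_value}, and second show that the induced Boltzmann selection kernel converges to the optimal per-step policy $\pi_t^*$ of \Cref{eq:optimal_policy}; combining these over the horizon via the factorization $\pi^*=\prod_t\pi_t^*$ then yields convergence of the terminal-state distribution. Throughout I would exploit boundedness of $r$: by \Cref{eq:soft_value} every $V_t$ is bounded, and since $\hat{v}$ is always a soft-Bellman backup of bounded leaf rewards it lies in the same bounded range $[a,b]$, so every selection weight $\exp(\lambda\hat{v})$ is bounded above and below by positive constants.

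First I would establish \emph{infinite exploration}: as $M\to\infty$ every node is selected and expanded infinitely often, so its children set $\mathcal{C}(\bx_t)$ grows without bound and, by the expansion rule, forms an i.i.d.\ sample from $p_\theta(\cdot\mid\bx_t)$. Because values are bounded, at a node with $n$ children the Boltzmann probability of any fixed child is at least $n^{-1}e^{-\lambda(b-a)}$; a Borel--Cantelli argument applied recursively from the root then guarantees that each node is revisited infinitely often almost surely, and that expansion keeps adding fresh children. Here I would flag that the asymptotic statement is taken in the regime where the maximum branching is allowed to grow, so the cap is eventually non-binding along every branch — with a fixed cap the finite backup average is biased and exactness fails.

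With exploration in hand, I would prove value convergence by \emph{backward induction on $t$}. The base case is exact, $\hat{v}(\bx_0)=r(\bx_0)=V_0(\bx_0)$. For the inductive step, the backup at an internal node is the empirical soft-Bellman operator
\begin{align*}
\hat{v}(\bx_t)=\frac{1}{\lambda}\log\!\left(\frac{1}{|\mathcal{C}(\bx_t)|}\sum_{\bx_{t-1}\in\mathcal{C}(\bx_t)}\exp\!\big(\lambda\,\hat{v}(\bx_{t-1})\big)\right),
\end{align*}
whose summands are bounded. Since the children are i.i.d.\ from $p_\theta(\cdot\mid\bx_t)$, the strong law of large numbers gives convergence of the empirical average to $\mathbb{E}_{p_\theta(\bx_{t-1}\mid\bx_t)}[\exp(\lambda V_{t-1}(\bx_{t-1}))]$ once the inductive hypothesis $\hat{v}(\bx_{t-1})\to V_{t-1}(\bx_{t-1})$ lets me replace the estimated weights by the true ones; passing the limit through the bounded continuous $\log$ recovers exactly \Cref{eq:soft_bellman}, so $\hat{v}(\bx_t)\to V_t(\bx_t)$.

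Finally I would upgrade value convergence to policy convergence and compose. Selecting a child $\bx_{t-1}\sim p_\theta(\cdot\mid\bx_t)$ with probability $\propto\exp(\lambda\hat{v}(\bx_{t-1}))$ is precisely a self-normalized importance sampler with proposal $p_\theta$ and weights $\exp(\lambda V_{t-1})$, so as $|\mathcal{C}(\bx_t)|\to\infty$ and $\hat{v}\to V_{t-1}$ the law of the selected child converges weakly to $\pi_t^*(\cdot\mid\bx_t)$; composing these kernels from $t=T$ down to $t=0$ and using $\pi^*=\prod_t\pi_t^*$ shows the empirical distribution of terminal states converges to $\pi^*$. The main obstacle is the feedback coupling: selection uses the \emph{current, noisy} estimates $\hat{v}$ to decide where the tree grows, which in turn supplies the samples that refine $\hat{v}$, so the children are grown adaptively rather than independently. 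I would handle this by making the backward induction uniform in a bounded region — near-terminal values stabilize first and propagate upward — and by invoking continuity of $V_t$ in $\bx_t$ together with the bounded weights to ensure the nested self-normalized estimators remain consistent despite the adaptive tree growth.
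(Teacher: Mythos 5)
Your proposal is correct and rests on the same central identity as the paper's proof: Boltzmann selection among children drawn from $p_\theta(\cdot\mid\bx_t)$, weighted by exponentiated soft values, reproduces the optimal per-step kernel $\pi_t^*$ of \Cref{eq:optimal_policy}, and composing these kernels over $t$ telescopes to the target $\pi^*(\bx_0)\propto p_\theta(\bx_0)\exp(\lambda r(\bx_0))$. Where you differ is in how the limit $M\to\infty$ is treated. The paper works in the reverse order: it first writes the \dtsa transition density with the \emph{estimated} values $\hat{v}$, telescopes the product so that all intermediate $\exp(\lambda\hat v(\bx_t))$ factors cancel, marginalizes to obtain $\hat q(\bx_0)\propto p(\bx_0)\exp(\lambda\hat v(\bx_0))$ with $\hat v(\bx_0)=r(\bx_0)$ exact at the leaves, and then disposes of the remaining dependence on $\hat v$ (which survives only in the normalizer and in the implicit assumption that selection realizes the continuous Boltzmann-tilted kernel) with a one-line assertion that the estimates approach the true soft values in the limit of infinite rollouts. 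You instead front-load exactly the content of that assertion: infinite exploration of every node via a conditional Borel--Cantelli argument (which, as you note, needs the progressive-widening exponent $\alpha<1$ so that $\sum_m n_m^{-1}$ diverges and the child cap is eventually non-binding), strong-law convergence of the empirical log-sum-exp backup by backward induction from the exact terminal condition, and weak convergence of the self-normalized finite-child selection to $\pi_t^*$. You also correctly flag the two points the paper silently passes over: that a fixed branching cap would leave the finite-sample log-sum-exp biased, and that the adaptive feedback between selection and tree growth breaks the i.i.d.\ structure of the children, which is the genuinely delicate step and is only sketched in both your proposal and the paper. Net effect: same decomposition and same key lemma, but your version makes explicit the stochastic-approximation argument that the paper compresses into its final sentence, at the cost of leaving the adaptive-coupling step at the level of a plan rather than a proof.
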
\vspace{-0.5em}
\begin{proof}[Proof sketch]
By construction, the tree policy selects $\bx_{t+1}$ with unnormalized probability $p_\theta(\bx_{t+1}\mid\bx_t) \,\exp(\lambda \hat{v}(\bx_{t+1}))$, which is the optimal policy defined in \Cref{eq:optimal_policy}. By telescoping the product over $t$, we obtain the final samples at $t=T$ are sampled from $p_\theta(\bx) \exp(r(\bx_0))$. A more detailed proof is given in \Cref{app:proof}.
\end{proof}

\subsection{Design choices for diffusion alignment}
\label{sec:dts_design}

The algorithm discussed above can be applied to any Markov chain. However, in this work, we apply it to the problem of inference-time alignment of diffusion models. We discuss various considerations and design choices below, with more implementation details in \Cref{app:implementation}.

\paragraph{Sampling or Search.} 
\dtsa is designed to sample from the target distribution $\pi^*$, but for settings where a single high‑quality sample is required, we introduce a \emph{search} variant, \dtse.
It keeps the same soft value backup but modifies the selection step by always selecting the child with the largest soft value estimate instead of Boltzmann sampling. Since \dtse uses soft values, this is different from standard MCTS -- it implements a \emph{marginal‑MAP} (max–sum) inference scheme \citep{robert1999marginal} over the tree. At every noise step, the algorithm selects the branch whose subtree carries the greatest mass under $\pi^\star$ and, once $t=0$ is reached, returns the highest‑density leaf inside that dominant region. As we will see in the \Cref{sec:exp}, this volume-based selection helps avoid reward over-optimization. 

\paragraph{Branching.} 
Extensions of MCTS to continuous spaces commonly use \emph{progressive widening} \citep{couetoux2011continuous} to decide the maximum number of branches $B(\bx_t)$ allowed per node based on the number of visits:
    $B(\bx_t) = C \,\cdot\, N(\bx_t)^\alpha, \; C > 0,\; \alpha \in (0,1).$
The high-level intuition is that nodes that are visited more often should be expanded more, since they represent more promising directions for denoising. We adopt the same strategy and during tree traversal, if we encounter a node such that $|\mathcal{C}(\bx_t)| < B(\bx_t)$ and $t>0$, we will always expand. 

\paragraph{Exploration.} There is a rich literature on search methods for classical MCTS, and the most popular approach, UCT \citep{kocsis2006bandit}, is an application of upper-confidence bounds \citep{auer2002finite} to trees. We employ this exploration strategy for \dtse, i.e. we choose the child $\bx_{t-1} \in \mathcal{C}(\bx_t)$ with the maximum value of the UCT estimate:
\begin{align}
    \operatorname{UCT}(\bx_{t-1}) = \hat{v}(\bx_{t-1}) + c_\text{uct}\,\sqrt{\frac{\log{N(\bx_t)}}{N(\bx_{t-1})}}, \qquad c_\text{uct} > 0.
\end{align}
For \dtsa\!\!, we do not employ explicit exploration, because, in practice we observe that sampling obviates the need for an exploration bonus or handcrafted mechanism.

\paragraph{Efficient implementation.} The main computational cost is incurred when using the diffusion model proposal to sample new children or perform rollouts. We implement an efficient batched version of the algorithm by collecting nodes by timestep and performing one forward pass for all nodes at the same timestep. The selection and backup steps involve simple tensor operations and pointer manipulation with negligible cost. Therefore, while the control flow of our method is sequential, the practical algorithm can be parallelized. Note that once the tree has been built, sampling is near instantaneous by repeatedly selecting children without any model calls. 

\subsection{Illustrative experiments}
\label{sec:dts_exp}

\begin{figure}[b]
    \centering
    \includegraphics[width=\linewidth]{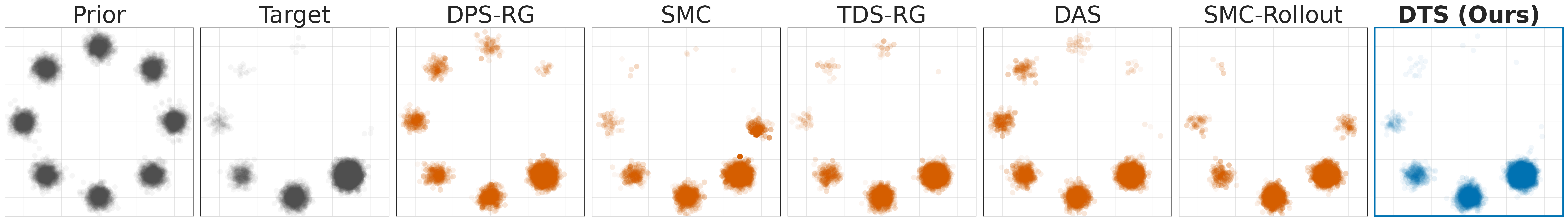}
    \includegraphics[width=\linewidth,trim={0em 0em 0em 3.5em},clip]{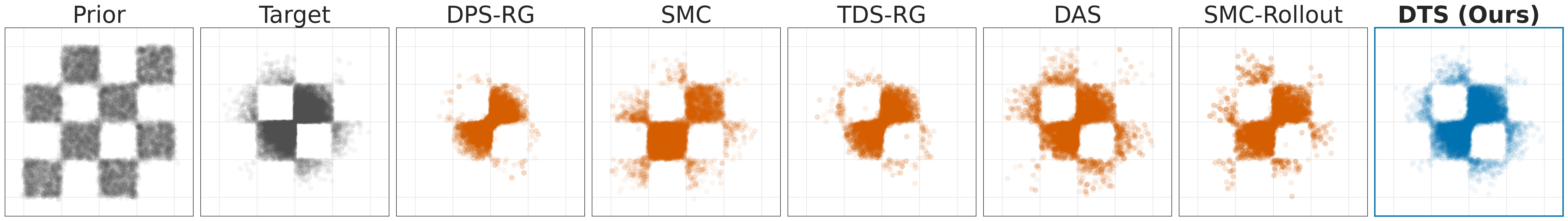}
    \caption{ Samples from the prior $p(\bx_0)$, target $p(\bx_0)\exp(r(\bx_0))\,/\,\mathcal{Z}$ and different sampling methods at $10^6$ NFEs. \textbf{Top:} The prior is an equal-weighted mixture of Gaussians, and the reward function distributes mass unevenly. \textbf{Bottom:} The prior has support on alternate square regions in a checkerboard pattern, and the reward function $r(x,y) = -0.5(x^2 + y^2)$ is negative distance from the origin.}
    \label{fig:2d_dist}
\end{figure}

\begin{figure}[t]
    \centering
    \includegraphics[width=\textwidth]{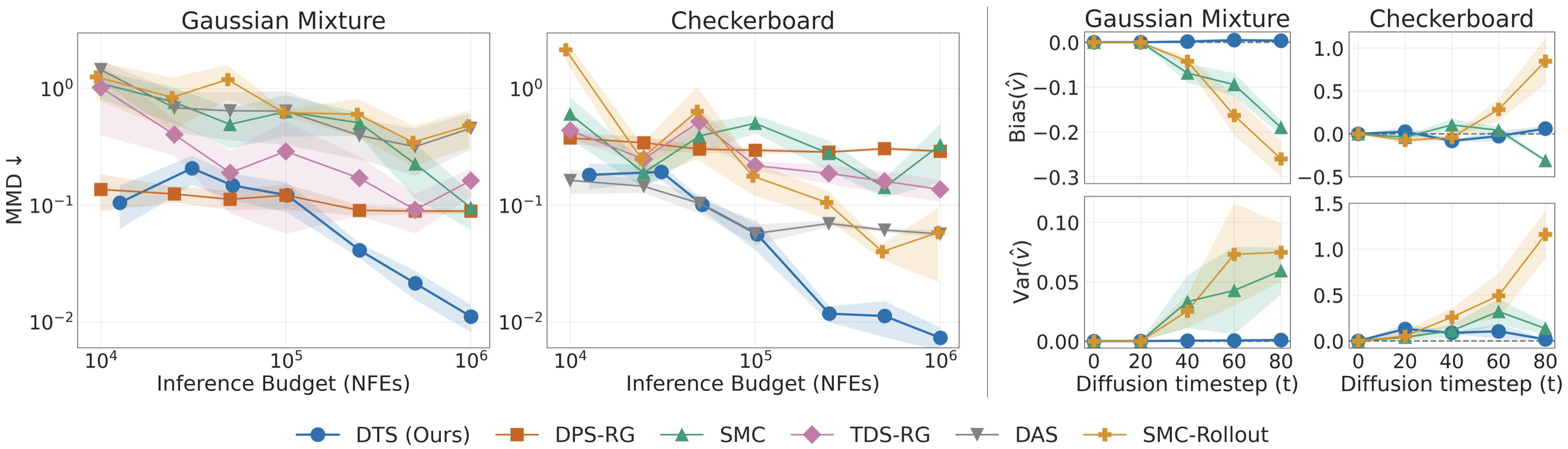}
    \caption{\textbf{Left:} Maximum mean discrepancy (MMD) between generated samples and target ground truth samples as a function of number of function evaluations of the prior diffusion model. \textbf{Right:} Bias and variance of value estimates for different approaches at $10^6$ NFEs.}
    \vspace{-1em}
    \label{fig:2d_metrics}
\end{figure}

In this section, we perform experiments on simple 2D settings to answer the following questions:
\begin{itemize}[leftmargin=1.8em,itemsep=0em]
    \item Does \dtsa sample accurately from the target distribution?
    \item Does reward backup in \dtsa result in more accurate value estimates (desideratum D1)?
    \item Does sample quality of \dtsa improve with more inference-time compute (desideratum D2)?
\end{itemize}

\paragraph{Setting.} Consider a pretrained diffusion model as the prior $p(\bx_0)$ and an associated reward function $r(\bx_0)$ defined on the support of $p$. The goal is to draw samples $\propto p(\bx_0)\exp(r(\bx_0))$. We consider two settings: (a) the prior is a mixture of eight Gaussians with uniform weights, while the reward function distributes mass unevenly to the different modes, and (b) the prior has alternate regions of support in a checkerboard pattern, and the reward function is the negative distance from the origin, $r(x,y) = -0.5(x^2 + y^2)$. More details on the experimental setup are provided in \Cref{app:implementation_exp}. 


\paragraph{Baselines.}

We compare \dtsa with several inference‑time steering methods including some which were originally proposed for posterior sampling in inverse problems, and adapt them to the reward‑guidance setting: (1) \textsl{DPS‑RG} (our reward‑guided version of DPS \citep{chung2023dps}) = gradient‑based guidance only, (2) \textsl{SMC} \citep{singhal2025general}, (3) \textsl{TDS‑RG} (reward-guided version of TDS \citep{wu2023practical}) = SMC + gradient guidance, (4) \textsl{DAS} \citep{kim2025test} = SMC + gradient guidance + tempering. We also implement a version of SMC, which we call \textsl{SMC-Rollout}, where the values are estimated via one full DDIM \citep{song2020denoising} rollout. For fair comparison, we benchmark all methods with respect to number of function evaluations (NFEs) of the diffusion model.


\paragraph{Results.} 


\Cref{fig:2d_dist} plots the samples obtained using different methods for two different settings. In both cases, \dtsa approximates the ground truth target density more accurately, with other methods distributing mass inaccurately to different areas of the support. 
In particular, gradient-based methods like DPS-RG and TDS-RG suffer from instability and require gradient clipping to stabilize denoising steps.
\Cref{fig:2d_metrics} shows the Maximum mean discrepancy (MMD) between generated samples and ground truth samples for an increasing amount of inference budget, measured in terms of the number of function evaluations of the diffusion model. 
This empirically validates that the sample quality of \dtsa improves with more compute, satisfying desideratum \textsc{D2}.

\paragraph{Bias-variance analysis of value estimates.} We estimate ground truth soft value estimates at different timesteps by performing $1000$ rollouts from noisy states using the base model and then taking log-sum-exp of the rewards. We then compute the relative mean squared error with the value estimates obtained using different approximations and decompose it into bias and variance. \Cref{fig:2d_metrics} shows this for different diffusion timesteps using \dtsa, Tweedie's formula (SMC + variants), and a single full DDIM rollout (SMC-Rollout). 
Both one-step denoising and single rollout have high bias and variance, which generally get worse for higher timesteps. Our tree-based approach reduces bias by using accurate reward information and reduces variance by aggregating information from multiple rollouts.
This empirically validates that \dtsa satisfies desideratum \textsc{D1}.


\section{Experiments}
\label{sec:exp}

We validate the efficacy of \dtsa and \dtse for image and text generation. Our experiments show that \dtsa draws faithful samples from high-dimensional posteriors, and \dtse efficiently searches high-dimensional image space to discover high-reward samples. We provide detailed description of each experimental setting in \Cref{app:implementation_exp}, details of baseline implementations in \Cref{app:baselines}, and additional results in \Cref{app:exp}.

\subsection{Class-conditional posterior sampling}
\label{sec:img_exp}

We evaluate \dtsa on the task of sampling from a class-conditioned posterior distribution $p(\bx \mid c) \propto p_\theta(\bx)p(c\mid \bx)$ where $p_\theta(\bx)$ is a pretrained unconditional diffusion model and $p(c\mid \bx)$ is a classifier. This would correspond to setting $r(\bx) = \log p(c\mid \bx)$ in \Cref{eq:product}.

\paragraph{Setting.} We use MNIST and CIFAR-10 datasets, each with 10 classes. In both cases, the priors are unconditional diffusion models in pixel-space -- we train one from scratch on MNIST and use an off-the-shelf model\footnote{\href{https://huggingface.co/google/ddpm-cifar10-32}{https://huggingface.co/google/ddpm-cifar10-32}}
for CIFAR-10. For MNIST, we consider two settings: sampling from individual digits, and sampling from even/odd digits. The latter is a multimodal posterior with reward function $r(\bx) = \max_{\{i=0,2,4,6,8\}} \log p(c=i\mid \bx)$ for the even digits and similarly for odd digits. For CIFAR-10, we sample from individual classes.

\begingroup
\setlength{\tabcolsep}{2pt}       
\renewcommand{\arraystretch}{1.2} 
\begin{table}[t]
\caption{Comparison of inference-time posterior sampling methods. We report the mean{\scriptsize $\pm$std} of each metric across the relevant classes and highlight $\pm5\%$ values from the best experimental value.}
\label{tab:img_results}
\centering
\resizebox{\columnwidth}{!}{%
\begin{tabular}{%
  l cccc cccc cccc
}
\toprule
Dataset →
  & \multicolumn{4}{c}{\textbf{MNIST}}
  & \multicolumn{4}{c}{\textbf{MNIST even/odd}}
  & \multicolumn{4}{c}{\textbf{CIFAR-10}} \\
\cmidrule(lr){2-5}\cmidrule(lr){6-9}\cmidrule(l){10-13}
Algorithm ↓ 
  & FID\,(↓)
  & CMMD\,(↓)
  & $\mathbb{E}[\log r(\bx)]$\,(↑)
  & Diversity\,(↑)
  & FID\,(↓)
  & CMMD\,(↓)
  & $\mathbb{E}[\log r(\bx)]$\,(↑)
  & Diversity\,(↑)
  & FID\,(↓)
  & CMMD\,(↓)
  & $\mathbb{E}[\log r(\bx)]$\,(↑)
  & Diversity\,(↑) \\
\midrule
DPS & \valunc{0.359}{0.227} & \valunc{0.441}{0.447} & \valunc{-0.323}{0.286} & \highlight{\valunc{0.474}{0.051}} & \valunc{0.123}{0.031} & \valunc{0.293}{0.139} & \valunc{-0.002}{0.001} & \valunc{0.572}{0.053} & \valunc{0.486}{0.121} & \valunc{2.609}{0.824} & \highlight{\valunc{-0.002}{0.001}} & \highlight{\valunc{0.551}{0.024}} \\
SMC/FK & \valunc{0.060}{0.051} & \valunc{0.177}{0.142} & \valunc{-0.002}{0.004} & \valunc{0.422}{0.040} & \valunc{0.027}{0.009} & \valunc{0.123}{0.113} & \valunc{-0.003}{0.003} & \highlight{\valunc{0.583}{0.084}} & \valunc{0.313}{0.070} & \valunc{1.409}{0.445} & \valunc{-0.102}{0.093} & \valunc{0.487}{0.045} \\
TDS & \valunc{0.087}{0.035} & \valunc{0.463}{0.260} & \highlight{\valunc{-0.001}{0.001}} & \valunc{0.404}{0.042} & \valunc{0.053}{0.010} & \valunc{0.250}{0.056} & \highlight{\valunc{-0.001}{0.000}} & \highlight{\valunc{0.576}{0.124}} & \valunc{0.487}{0.112} & \valunc{2.675}{0.665} & \valunc{-0.046}{0.055} & \valunc{0.469}{0.042} \\
DAS & \valunc{0.039}{0.017} & \valunc{0.179}{0.099} & \valunc{-0.016}{0.016} & \valunc{0.440}{0.041} & \valunc{0.031}{0.002} & \valunc{0.079}{0.011} & \valunc{-0.015}{0.019} & \highlight{\valunc{0.603}{0.094}} & \valunc{0.241}{0.037} & \valunc{0.822}{0.203} & \valunc{-0.584}{0.200} & \highlight{\valunc{0.530}{0.023}} \\
\midrule
\textbf{DTS (ours)} & \highlight{\valunc{0.014}{0.005}} & \highlight{\valunc{0.068}{0.030}} & \valunc{-0.023}{0.006} & \highlight{\valunc{0.452}{0.050}} & \highlight{\valunc{0.007}{0.003}} & \highlight{\valunc{0.036}{0.029}} & \valunc{-0.010}{0.004} & \highlight{\valunc{0.597}{0.069}} & \highlight{\valunc{0.195}{0.041}} & \highlight{\valunc{0.745}{0.201}} & \valunc{-0.305}{0.116} & \highlight{\valunc{0.542}{0.020}} \\
\bottomrule
\end{tabular}
}
\end{table}
\endgroup
\begin{figure*}[t]
    \centering
    \vspace{-0.5em}
    \includegraphics[width=\textwidth]{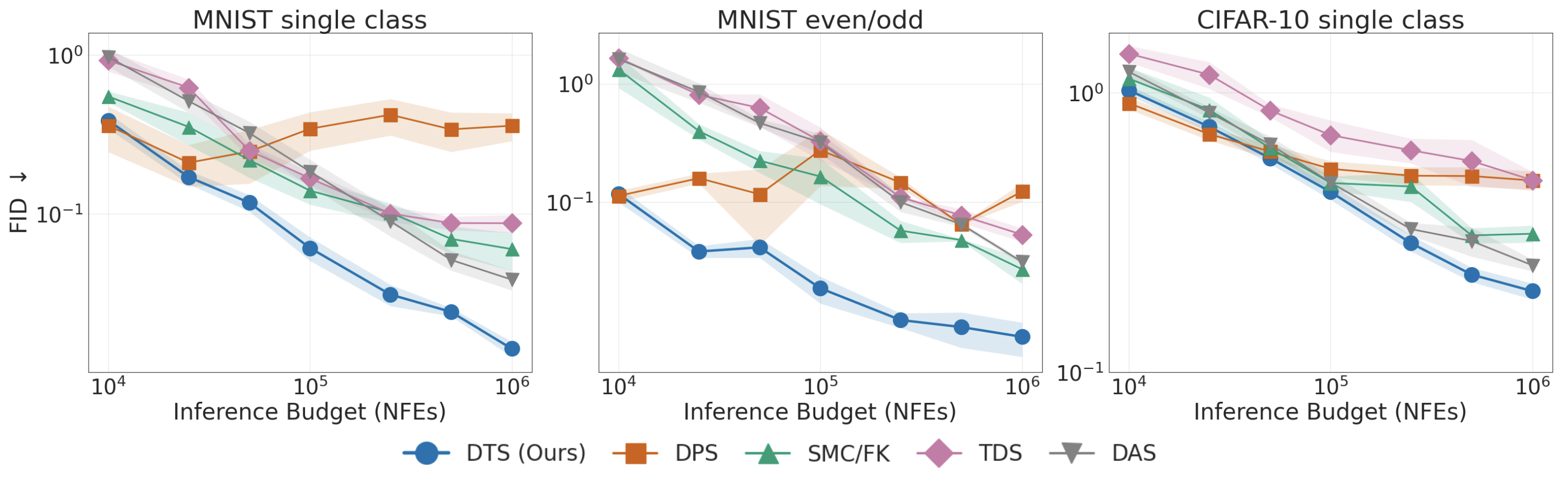}
    \caption{FID (lower is better) versus number of function evaluations for different methods on MNIST single digit generation averaged over all 10 digits (left), MNIST odd and even digit generation (center), and CIFAR-10 single class generation averaged over all 10 classes (right). All methods were evaluated with $5000$ generated samples per class.}
    \vspace{-1.25em}
    \label{fig:img_fid}
\end{figure*}

\paragraph{Baselines.} 
We compare the performance of \dtsa with DPS \citep{chung2023dps}, SMC/FK \citep{singhal2025general}, TDS \citep{wu2023practical} and DAS \citep{kim2025test}. DPS, TDS, and DAS use reward gradients, while SMC/FK is derivative-free. We report two distribution-based metrics -- Fr\'echet Inception Distance (FID) and CLIP maximum mean discrepancy (CMMD) \citep{jayasumana2024rethinking} -- that compare generated samples with ground truth samples from the dataset, in addition to average rewards and CLIP diversity (pairwise cosine distance).

\paragraph{Results.} \Cref{tab:img_results} reports the mean{\scriptsize$\pm$std} of various metrics for different methods after $10^6$ NFEs. In all three settings, \dtsa achieves the lowest FID and CMMD by a considerable margin, indicating it closely matches the true posterior. We observe that the margin of improvement on CIFAR-10 narrows slightly. We attribute this to reward noise: the CIFAR-10 classifier achieves an accuracy of $\sim\!85\%$, so its logits provide a noisier signal than the near-perfect classifier used on MNIST. Even so, \dtsa still outperforms all baselines. TDS and SMC in particular show characteristics of mode collapse with very high average rewards and low diversity, whereas DPS often generates samples that lie outside the support of the base model. \Cref{fig:img_fid} shows that \dtsa achieves very low FID across different NFEs and has better scaling properties with more compute compared to existing methods. \Cref{fig:img_samples} presents example outputs from each method, highlighting their specific characteristics. We present samples for all classes and plots of additional metrics as a function of NFEs in \Cref{app:exp_img}.

\begin{figure*}[b]
    \centering
    \vspace{-1em}
    \includegraphics[width=0.78\textwidth,trim={2em 0 2em 2em},clip]{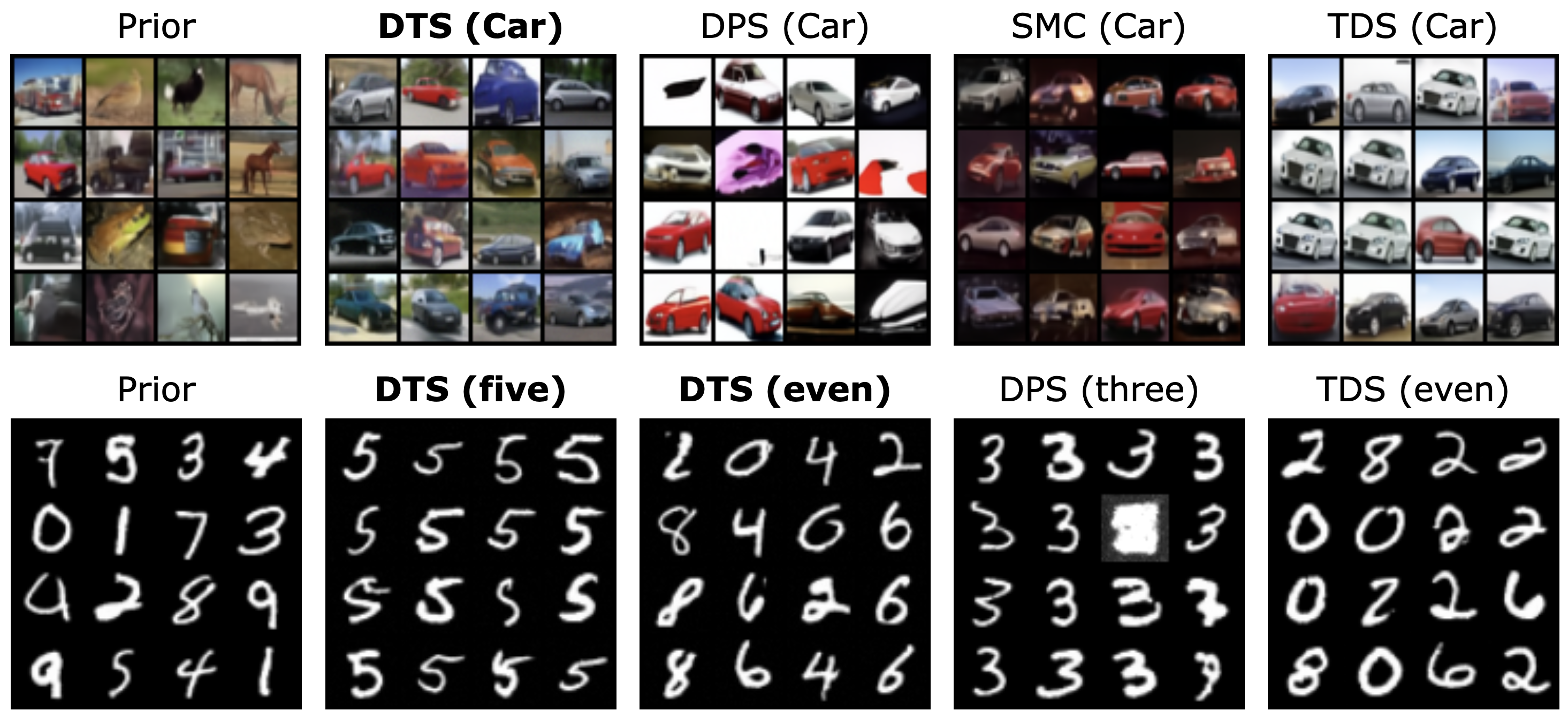}
    \caption{Samples generated from the CIFAR-10 (top) and MNIST (bottom) base diffusion models, and posterior samples using different methods at $10^6$ NFEs. Gradient-based guidance such as DPS can be unstable, leading to samples that lie outside the support of the prior. SMC-based methods struggle to accurately sample from multi-modal distributions -- for MNIST even digits, TDS oversamples from the digit two and undersamples from the digit four, and for CIFAR-10 car, both SMC and TDS suffer from mode collapse.}
    \vspace{-1em}
    \label{fig:img_samples}
\end{figure*}

\subsection{Text-to-image generation}
\label{sec:sd_exp}


\begin{figure}[t]
    \centering
    \includegraphics[width=\linewidth]{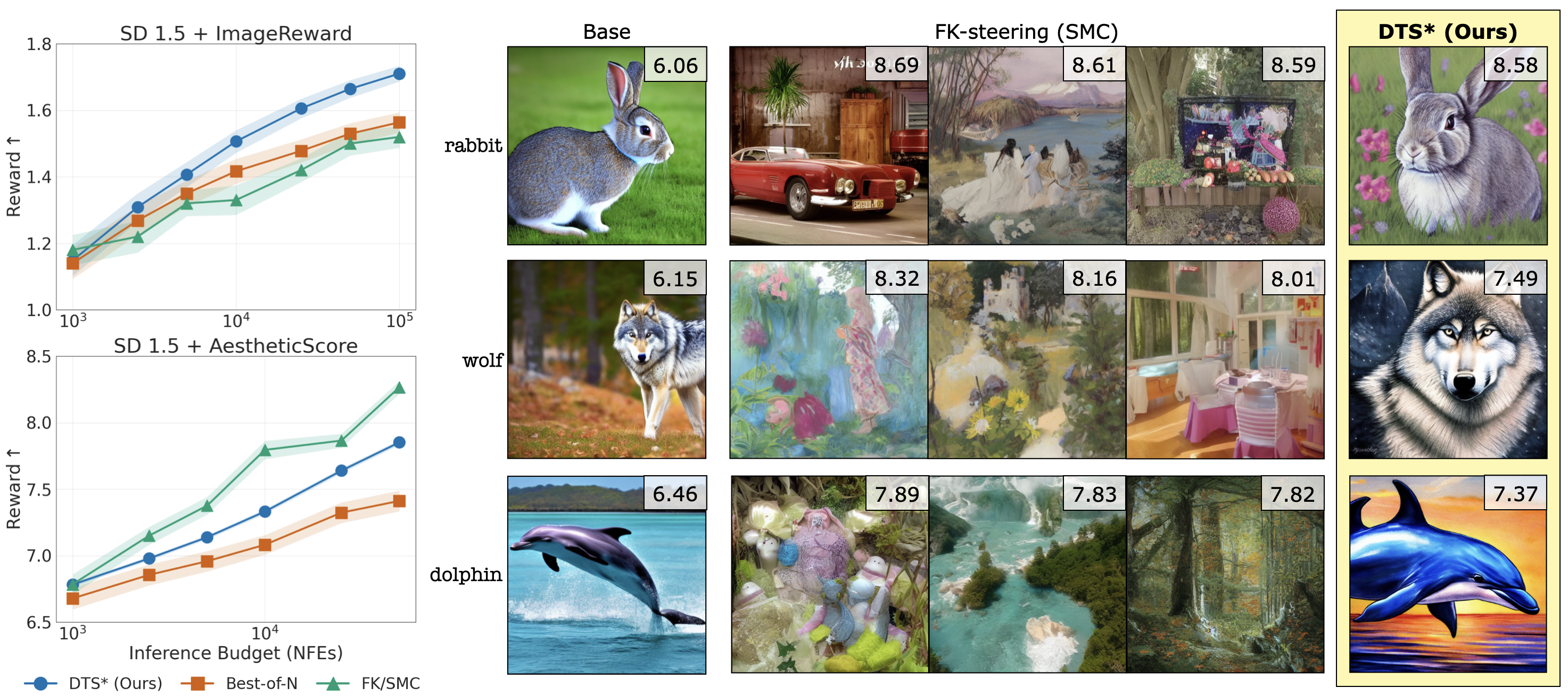}
    \caption{\textbf{Left:} Maximum ImageReward \citep{xu2023imagereward} vs. compute (NFEs) per prompt, averaged over $200$ prompts from DrawBench \citep{saharia2022photorealistic}, and maximum aesthetic score \citep{schuhmann2022laion} vs. compute (NFEs) per prompt, averaged over $45$ common animal prompts. \textbf{Right:} Samples generated using SD-v1.5 \citep{rombach2022high} for simple animal prompts and aesthetic score as the reward at $100k$ NFEs. For each prompt, \dtse faithfully matches the prompt while achieving high reward, whereas SMC samples score higher but visibly over-optimize. Numbers in the corner show aesthetic scores.}
    \vspace{-1.5em}
    \label{fig:sd_aesthetic}
\end{figure}

Large-scale diffusion models for text-conditioned image generation \citep{rombach2022high} can generate high-quality images and effectively generalize to new prompts. However, they struggle with complex captions, such as those that involve multiple components ("an emoji of a baby panda wearing a red hat, blue gloves, green shirt, blue pants") or unseen settings ("a panda making latte art"). A common approach to improve the quality of such complex prompts is to use human preferences \citep{xu2023imagereward} for fine-tuning the model or for inference-time alignment. The goal in this setting is to generate images with high reward while staying within the support of the base model.

\paragraph{Setting.} We use Stable Diffusion v1.5 \citep{rombach2022high}, a latent diffusion model, as the prior over $512\times 512$ images $\bx \sim p_\theta(\bx \mid \by)$ where $\by$ denotes the text prompt. We evaluate on two different settings: (a) DrawBench \citep{saharia2022photorealistic}, which is comprehensive benchmark of $200$ prompts, with ImageReward \citep{xu2023imagereward} $r(\bx, \by)$ that encodes prompt accuracy as well as human preferences; and (b) following \citet{black2024training}, we use $45$ common animals from the ImageNet-1000 dataset as prompts, with the  LAION aesthetics predictor \citep{schuhmann2022laion} $r(\bx)$ that encodes aesthetic quality of an image but does not check for prompt accuracy.

\paragraph{Baselines.} A strong baseline for high‑reward generation is \emph{best‑of‑$N$}, which draws $N$ samples from the base model and keeps the one with the highest reward. SMC  has also been applied to this problem 
\citep{singhal2025general}, but (a) as discussed in \Cref{sec:value}, it relies on one‑step value estimates that become inaccurate at high noise, and (b) \Cref{sec:img_exp} shows that it often collapses onto narrow
modes.
We compare \dtse with best-of-N and FK-Steering (SMC) \citep{singhal2025general}. For fair comparison, we set the number of candidates for best-of-N or the number of particles for FK-Steering based on the number of function evaluations of the base diffusion model, and record the highest reward for each method per prompt. 

\paragraph{Results.} 
\Cref{fig:sd_aesthetic} plots the maximum ImageReward and aesthetic score versus inference compute. \dtse outperforms the baselines for DrawBench prompts, see \Cref{fig:main_fig} for examples. One strong feature of \dtse is its favorable scaling properties with more compute. In the aesthetic score setting, FK/SMC achieves the highest rewards across NFEs, but we see severe over-optimization on all prompts. \dtse manages to strike the right balance between achieving high rewards while still maintaining faithfulness to the prior model. We hypothesize that since \dtse\ backs up \emph{soft} values, each node aggregates \emph{posterior mass} rather than peak density. Consequently, a reward spike that lies in a vanishing‑probability region of the prior contributes negligibly to the value estimates, resulting in an implicit KL‑regularization effect (cf. \Cref{sec:prelim_3}).

\subsection{Text generation}

\paragraph{Setting.}
We evaluate \dtse on text generation using MDLM \citep{sahoo2024simpleeffectivemaskeddiffusion}, a discrete diffusion language model. We generate three text completions of length 64 for each of 15 prompts introduced by \citet{han2023ssdlmsemiautoregressivesimplexbaseddiffusion}. The reward is defined as the log probability that the text is classified grammatically `acceptable' by a BERT-based classifier \citep{morris2020textattackframeworkadversarialattacks} trained on the Corpus of Linguistic Acceptability (CoLA) \citep{warstadt-etal-2019-neural}. 
We also report diversity by computing the number of distinct trigrams in each generated sequence. For decoding, we find that using \dtse with max-backup ($\lambda \rightarrow \infty$) yields the best performance. We compare our method against two baselines: FK/SMC \citep{singhal2025general} and best-of-$N$.

\paragraph{Results.}
As shown in \Cref{fig:text_generation}, \dtse consistently achieves the highest rewards as the number of function evaluations (NFEs) increases. Notably, reward functions in text domains are particularly susceptible to over-optimization, leading to the less diverse outputs observed when using FK/SMC. By contrast, \dtse produces outputs that have both high rewards and high diversity.
\begin{figure}[t]
    \centering
    \includegraphics[width=0.8\linewidth]{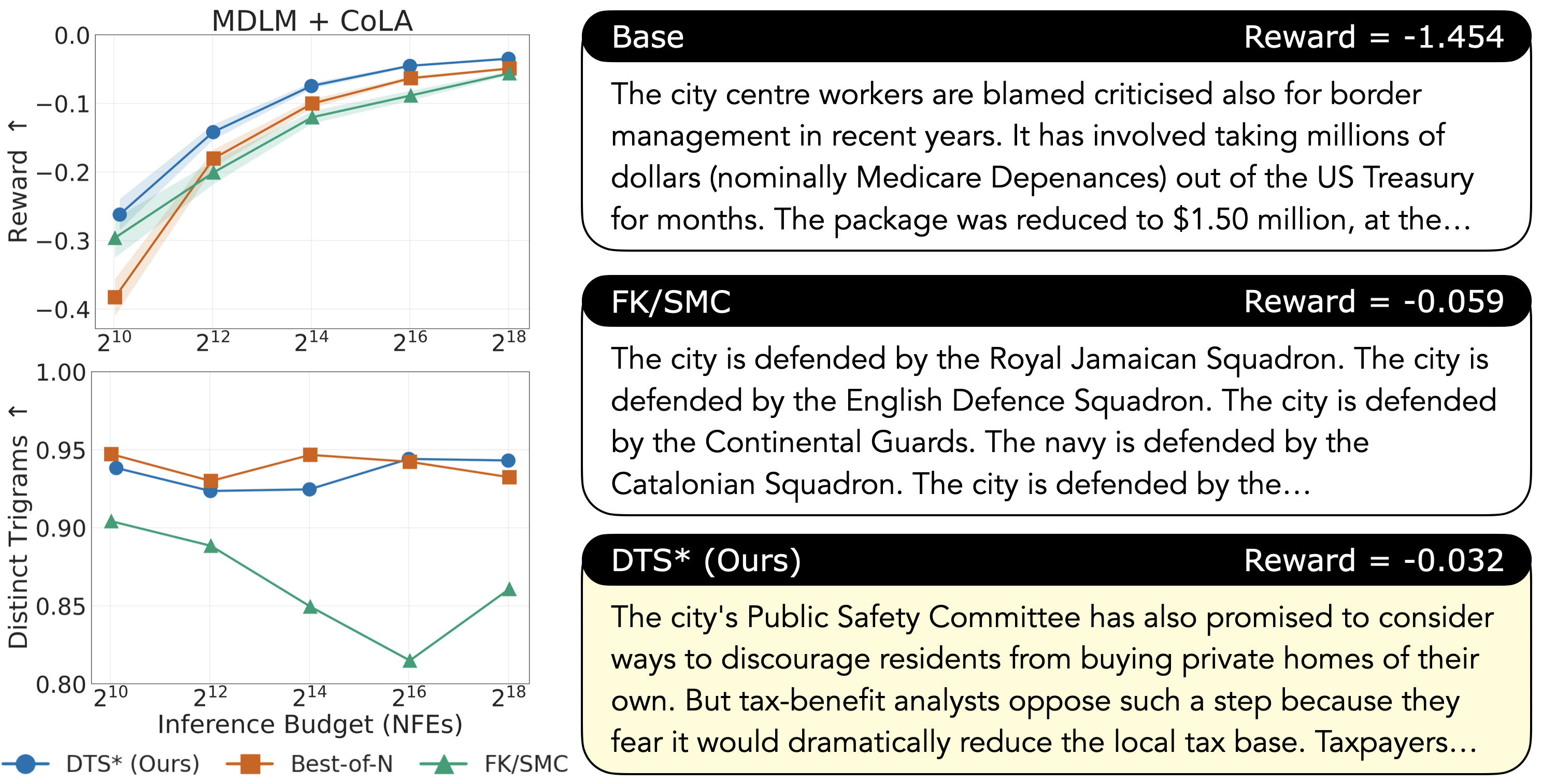}
    \caption{\textbf{Left:} Maximum reward and distinct trigrams vs. compute (NFEs) per prompt, across $15$ simple prompts with MDLM \citep{sahoo2024simpleeffectivemaskeddiffusion} using a classifier trained on CoLA \citep{morris2020textattackframeworkadversarialattacks} as the reward. \dtse obtains the highest reward while maintaining diverse outputs. \textbf{Right:} Typical samples generated by the base model, FK/SMC, and \dtse for the prompt \texttt{``The city''} at $2^{18}$ NFEs.}
    \vspace{-1.1em}
    \label{fig:text_generation}
\end{figure}

\section{Discussion}
\label{sec:discussion}

We have introduced a novel framework that casts inference–time alignment of diffusion models as a finite-horizon tree search.  By propagating terminal rewards via a soft value backup, our approach achieves global credit assignment and improved sample quality as compute increases.  Below, we highlight practical considerations, point out limitations, and suggest directions for further work.

\paragraph{High-dimensions and the role of pretrained model.} 
In high dimensions, an uninformed search tree grows exponentially with dimension, rendering pure tree search infeasible. A good quality pretrained model acts as a powerful prior, significantly pruning the effective search space. 

\paragraph{Learning the value function.}
In several applications of MCTS in game play, such as  AlphaZero \citep{silver2018general}, deep neural networks approximate both policy and value, speeding up search via learned rollouts and leaf estimates.  While our current work focuses on zero-shot inference-time alignment for \emph{any} unseen reward, an exciting future direction would be to integrate a learned value network for a fixed reward. 
This direction can address another important limitation -- when the number of samples is significantly larger than the tree, we see repeated samples corresponding to high probability leaves.

\paragraph{Compute cost.}
The control flow of tree-based methods is sequential, which makes them less parallelizable than particle-based methods such as SMC. However, as discussed in \Cref{sec:dts_design}, we implement an efficient version by batching calls to the diffusion model whenever possible, achieving per-step parallelism. On the extreme end, AlphaZero \citep{silver2018general} scaled MCTS across thousands of GPUs using multiple asynchronous `actors' to perform rollouts that update shared parameters; \dtsa can benefit from a similar design. 
Moreover, once the tree is constructed, sampling is purely pointer-chasing and incurs no further model calls, making repeated draws effectively free. 

\begin{ack}
This research is in part supported by CIFAR AI Chairs and the NSERC Discovery program. Mila and the Digital Research Alliance of Canada provided computational resources. 

\end{ack}

\clearpage

\bibliographystyle{plainnat}
\bibliography{references}

\begin{thebibliography}{83}
\providecommand{\natexlab}[1]{#1}
\providecommand{\url}[1]{\texttt{#1}}
\expandafter\ifx\csname urlstyle\endcsname\relax
  \providecommand{\doi}[1]{doi: #1}\else
  \providecommand{\doi}{doi: \begingroup \urlstyle{rm}\Url}\fi

\bibitem[Ajay et~al.(2023)Ajay, Du, Gupta, Tenenbaum, Jaakkola, and Agrawal]{ajay2023conditional}
Anurag Ajay, Yilun Du, Abhi Gupta, Joshua~B Tenenbaum, Tommi~S Jaakkola, and Pulkit Agrawal.
\newblock Is conditional generative modeling all you need for decision making?
\newblock In \emph{The Eleventh International Conference on Learning Representations}, 2023.

\bibitem[Auer et~al.(2002)Auer, Cesa-Bianchi, and Fischer]{auer2002finite}
Peter Auer, Nicolo Cesa-Bianchi, and Paul Fischer.
\newblock Finite-time analysis of the multiarmed bandit problem.
\newblock \emph{Machine learning}, 47:\penalty0 235--256, 2002.

\bibitem[Bansal et~al.(2023)Bansal, Chu, Schwarzschild, Sengupta, Goldblum, Geiping, and Goldstein]{bansal2023universal}
Arpit Bansal, Hong-Min Chu, Avi Schwarzschild, Soumyadip Sengupta, Micah Goldblum, Jonas Geiping, and Tom Goldstein.
\newblock Universal guidance for diffusion models.
\newblock In \emph{Proceedings of the IEEE/CVF Conference on Computer Vision and Pattern Recognition}, pages 843--852, 2023.

\bibitem[Bengio et~al.(2021)Bengio, Jain, Korablyov, Precup, and Bengio]{bengio2021flow}
Emmanuel Bengio, Moksh Jain, Maksym Korablyov, Doina Precup, and Yoshua Bengio.
\newblock Flow network based generative models for non-iterative diverse candidate generation.
\newblock \emph{Advances in Neural Information Processing Systems}, 34:\penalty0 27381--27394, 2021.

\bibitem[Black et~al.(2024)Black, Janner, Du, Kostrikov, and Levine]{black2024training}
Kevin Black, Michael Janner, Yilun Du, Ilya Kostrikov, and Sergey Levine.
\newblock Training diffusion models with reinforcement learning.
\newblock In \emph{The Twelfth International Conference on Learning Representations}, 2024.

\bibitem[Browne et~al.(2012)Browne, Powley, Whitehouse, Lucas, Cowling, Rohlfshagen, Tavener, Perez, Samothrakis, and Colton]{browne2012survey}
Cameron~B Browne, Edward Powley, Daniel Whitehouse, Simon~M Lucas, Peter~I Cowling, Philipp Rohlfshagen, Stephen Tavener, Diego Perez, Spyridon Samothrakis, and Simon Colton.
\newblock A survey of monte carlo tree search methods.
\newblock \emph{IEEE Transactions on Computational Intelligence and AI in games}, 4\penalty0 (1):\penalty0 1--43, 2012.

\bibitem[Cardoso et~al.(2024)Cardoso, Le~Corff, Moulines, et~al.]{cardoso2024monte}
Gabriel Cardoso, Sylvain Le~Corff, Eric Moulines, et~al.
\newblock Monte carlo guided denoising diffusion models for bayesian linear inverse problems.
\newblock In \emph{The Twelfth International Conference on Learning Representations}, 2024.

\bibitem[Chen et~al.(2024)Chen, Mart{\'\i}~Mons{\'o}, Du, Simchowitz, Tedrake, and Sitzmann]{chen2024diffusion}
Boyuan Chen, Diego Mart{\'\i}~Mons{\'o}, Yilun Du, Max Simchowitz, Russ Tedrake, and Vincent Sitzmann.
\newblock Diffusion forcing: Next-token prediction meets full-sequence diffusion.
\newblock \emph{Advances in Neural Information Processing Systems}, 37:\penalty0 24081--24125, 2024.

\bibitem[Chi et~al.(2023)Chi, Xu, Feng, Cousineau, Du, Burchfiel, Tedrake, and Song]{chi2023diffusion}
Cheng Chi, Zhenjia Xu, Siyuan Feng, Eric Cousineau, Yilun Du, Benjamin Burchfiel, Russ Tedrake, and Shuran Song.
\newblock Diffusion policy: Visuomotor policy learning via action diffusion.
\newblock \emph{The International Journal of Robotics Research}, page 02783649241273668, 2023.

\bibitem[Chung et~al.(2023)Chung, Kim, Mccann, Klasky, and Ye]{chung2023dps}
Hyungjin Chung, Jeongsol Kim, Michael~Thompson Mccann, Marc~Louis Klasky, and Jong~Chul Ye.
\newblock Diffusion posterior sampling for general noisy inverse problems.
\newblock In \emph{The Eleventh International Conference on Learning Representations}, 2023.

\bibitem[Clark et~al.(2024)Clark, Vicol, Swersky, and Fleet]{clark2024directly}
Kevin Clark, Paul Vicol, Kevin Swersky, and David~J Fleet.
\newblock Directly fine-tuning diffusion models on differentiable rewards.
\newblock In \emph{The Twelfth International Conference on Learning Representations}, 2024.

\bibitem[Cou{\"e}toux et~al.(2011)Cou{\"e}toux, Hoock, Sokolovska, Teytaud, and Bonnard]{couetoux2011continuous}
Adrien Cou{\"e}toux, Jean-Baptiste Hoock, Nataliya Sokolovska, Olivier Teytaud, and Nicolas Bonnard.
\newblock Continuous upper confidence trees.
\newblock In \emph{Learning and Intelligent Optimization: 5th International Conference, LION 5, Rome, Italy, January 17-21, 2011. Selected Papers 5}, pages 433--445. Springer, 2011.

\bibitem[Del~Moral et~al.(2006)Del~Moral, Doucet, and Jasra]{del2006sequential}
Pierre Del~Moral, Arnaud Doucet, and Ajay Jasra.
\newblock Sequential monte carlo samplers.
\newblock \emph{Journal of the Royal Statistical Society Series B: Statistical Methodology}, 68\penalty0 (3):\penalty0 411--436, 2006.

\bibitem[Deleu et~al.(2024)Deleu, Nouri, Malkin, Precup, and Bengio]{deleu2024discrete}
Tristan Deleu, Padideh Nouri, Nikolay Malkin, Doina Precup, and Yoshua Bengio.
\newblock Discrete probabilistic inference as control in multi-path environments.
\newblock In \emph{The 40th Conference on Uncertainty in Artificial Intelligence}, 2024.

\bibitem[Dhariwal and Nichol(2021)]{dhariwal2021diffusion}
Prafulla Dhariwal and Alexander Nichol.
\newblock Diffusion models beat gans on image synthesis.
\newblock \emph{Advances in Neural Information Processing Systems}, 34:\penalty0 8780--8794, 2021.

\bibitem[Domingo-Enrich et~al.(2024)Domingo-Enrich, Drozdzal, Karrer, and Chen]{domingo2024adjoint}
Carles Domingo-Enrich, Michal Drozdzal, Brian Karrer, and Ricky~TQ Chen.
\newblock Adjoint matching: Fine-tuning flow and diffusion generative models with memoryless stochastic optimal control.
\newblock \emph{arXiv preprint arXiv:2409.08861}, 2024.

\bibitem[Dou and Song(2024)]{dou2024diffusion}
Zehao Dou and Yang Song.
\newblock Diffusion posterior sampling for linear inverse problem solving: A filtering perspective.
\newblock In \emph{The Twelfth International Conference on Learning Representations}, 2024.

\bibitem[Efron(2011)]{efron2011tweedie}
Bradley Efron.
\newblock Tweedie’s formula and selection bias.
\newblock \emph{Journal of the American Statistical Association}, 106\penalty0 (496):\penalty0 1602--1614, 2011.

\bibitem[Fan et~al.(2023)Fan, Watkins, Du, Liu, Ryu, Boutilier, Abbeel, Ghavamzadeh, Lee, and Lee]{fan2023dpok}
Ying Fan, Olivia Watkins, Yuqing Du, Hao Liu, Moonkyung Ryu, Craig Boutilier, Pieter Abbeel, Mohammad Ghavamzadeh, Kangwook Lee, and Kimin Lee.
\newblock Dpok: Reinforcement learning for fine-tuning text-to-image diffusion models.
\newblock \emph{Advances in Neural Information Processing Systems}, 36:\penalty0 79858--79885, 2023.

\bibitem[Haarnoja et~al.(2017)Haarnoja, Tang, Abbeel, and Levine]{haarnoja2017reinforcement}
Tuomas Haarnoja, Haoran Tang, Pieter Abbeel, and Sergey Levine.
\newblock Reinforcement learning with deep energy-based policies.
\newblock In \emph{International Conference on Machine Learning}, pages 1352--1361. PMLR, 2017.

\bibitem[Han et~al.(2023)Han, Kumar, and Tsvetkov]{han2023ssdlmsemiautoregressivesimplexbaseddiffusion}
Xiaochuang Han, Sachin Kumar, and Yulia Tsvetkov.
\newblock Ssd-lm: Semi-autoregressive simplex-based diffusion language model for text generation and modular control, 2023.
\newblock URL \url{https://arxiv.org/abs/2210.17432}.

\bibitem[Hansen-Estruch et~al.(2023)Hansen-Estruch, Kostrikov, Janner, Kuba, and Levine]{hansen2023idql}
Philippe Hansen-Estruch, Ilya Kostrikov, Michael Janner, Jakub~Grudzien Kuba, and Sergey Levine.
\newblock Idql: Implicit q-learning as an actor-critic method with diffusion policies.
\newblock \emph{arXiv preprint arXiv:2304.10573}, 2023.

\bibitem[He et~al.(2016)He, Zhang, Ren, and Sun]{he2016deep}
Kaiming He, Xiangyu Zhang, Shaoqing Ren, and Jian Sun.
\newblock Deep residual learning for image recognition.
\newblock In \emph{Proceedings of the IEEE conference on computer vision and pattern recognition}, pages 770--778, 2016.

\bibitem[He et~al.(2024)He, Murata, Lai, Takida, Uesaka, Kim, Liao, Mitsufuji, Kolter, Salakhutdinov, et~al.]{he2024manifold}
Yutong He, Naoki Murata, Chieh-Hsin Lai, Yuhta Takida, Toshimitsu Uesaka, Dongjun Kim, Wei-Hsiang Liao, Yuki Mitsufuji, J~Zico Kolter, Ruslan Salakhutdinov, et~al.
\newblock Manifold preserving guided diffusion.
\newblock In \emph{The Twelfth International Conference on Learning Representations}, 2024.

\bibitem[Hendrycks and Gimpel(2016)]{hendrycks2016gaussian}
Dan Hendrycks and Kevin Gimpel.
\newblock Gaussian error linear units (gelus).
\newblock \emph{arXiv preprint arXiv:1606.08415}, 2016.

\bibitem[Ho and Salimans(2021)]{ho2021classifier}
Jonathan Ho and Tim Salimans.
\newblock Classifier-free diffusion guidance.
\newblock In \emph{NeurIPS 2021 Workshop on Deep Generative Models and Downstream Applications}, 2021.

\bibitem[Ho et~al.(2020)Ho, Jain, and Abbeel]{ho2020denoising}
Jonathan Ho, Ajay Jain, and Pieter Abbeel.
\newblock Denoising diffusion probabilistic models.
\newblock \emph{Advances in Neural Information Processing Systems}, 33:\penalty0 6840--6851, 2020.

\bibitem[Hoogeboom et~al.(2022)Hoogeboom, Satorras, Vignac, and Welling]{hoogeboom2022equivariant}
Emiel Hoogeboom, V{\i}ctor~Garcia Satorras, Cl{\'e}ment Vignac, and Max Welling.
\newblock Equivariant diffusion for molecule generation in 3d.
\newblock In \emph{International Conference on Machine Learning}, pages 8867--8887. PMLR, 2022.

\bibitem[Jain and Ravanbakhsh(2024)]{jain2024learning}
Vineet Jain and Siamak Ravanbakhsh.
\newblock Learning to reach goals via diffusion.
\newblock In \emph{International Conference on Machine Learning}, pages 21170--21195. PMLR, 2024.

\bibitem[Jain et~al.(2024)Jain, Akhound-Sadegh, and Ravanbakhsh]{jain2024sampling}
Vineet Jain, Tara Akhound-Sadegh, and Siamak Ravanbakhsh.
\newblock Sampling from energy-based policies using diffusion.
\newblock \emph{arXiv preprint arXiv:2410.01312}, 2024.

\bibitem[Janner et~al.(2022)Janner, Du, Tenenbaum, and Levine]{janner2022planning}
Michael Janner, Yilun Du, Joshua Tenenbaum, and Sergey Levine.
\newblock Planning with diffusion for flexible behavior synthesis.
\newblock In \emph{International Conference on Machine Learning}, pages 9902--9915. PMLR, 2022.

\bibitem[Jayasumana et~al.(2024)Jayasumana, Ramalingam, Veit, Glasner, Chakrabarti, and Kumar]{jayasumana2024rethinking}
Sadeep Jayasumana, Srikumar Ramalingam, Andreas Veit, Daniel Glasner, Ayan Chakrabarti, and Sanjiv Kumar.
\newblock Rethinking fid: Towards a better evaluation metric for image generation.
\newblock In \emph{Proceedings of the IEEE/CVF Conference on Computer Vision and Pattern Recognition}, pages 9307--9315, 2024.

\bibitem[Kang et~al.(2023)Kang, Ma, Du, Pang, and Yan]{kang2023efficient}
Bingyi Kang, Xiao Ma, Chao Du, Tianyu Pang, and Shuicheng Yan.
\newblock Efficient diffusion policies for offline reinforcement learning.
\newblock \emph{Advances in Neural Information Processing Systems}, 36:\penalty0 67195--67212, 2023.

\bibitem[Kim et~al.(2025)Kim, Kim, and Park]{kim2025test}
Sunwoo Kim, Minkyu Kim, and Dongmin Park.
\newblock Test-time alignment of diffusion models without reward over-optimization.
\newblock In \emph{The Thirteenth International Conference on Learning Representations}, 2025.

\bibitem[Kingma(2014)]{kingma2014adam}
Diederik~P Kingma.
\newblock Adam: A method for stochastic optimization.
\newblock \emph{arXiv preprint arXiv:1412.6980}, 2014.

\bibitem[Kocsis and Szepesv{\'a}ri(2006)]{kocsis2006bandit}
Levente Kocsis and Csaba Szepesv{\'a}ri.
\newblock Bandit based monte-carlo planning.
\newblock In \emph{European conference on machine learning}, pages 282--293. Springer, 2006.

\bibitem[Kozakowski et~al.(2022)Kozakowski, Pacek, and Milo{\'s}]{kozakowski2022planning}
Piotr Kozakowski, Mikolaj Pacek, and Piotr Milo{\'s}.
\newblock Planning and learning using adaptive entropy tree search.
\newblock In \emph{2022 International Joint Conference on Neural Networks (IJCNN)}, pages 1--8. IEEE, 2022.

\bibitem[LeCun et~al.(2015)LeCun, Bengio, and Hinton]{lecun2015deep}
Yann LeCun, Yoshua Bengio, and Geoffrey Hinton.
\newblock Deep learning.
\newblock \emph{nature}, 521\penalty0 (7553):\penalty0 436--444, 2015.

\bibitem[Lee et~al.(2023)Lee, Liu, Ryu, Watkins, Du, Boutilier, Abbeel, Ghavamzadeh, and Gu]{lee2023aligning}
Kimin Lee, Hao Liu, Moonkyung Ryu, Olivia Watkins, Yuqing Du, Craig Boutilier, Pieter Abbeel, Mohammad Ghavamzadeh, and Shixiang~Shane Gu.
\newblock Aligning text-to-image models using human feedback.
\newblock \emph{arXiv preprint arXiv:2302.12192}, 2023.

\bibitem[Li et~al.(2024)Li, Zhao, Wang, Scalia, Eraslan, Nair, Biancalani, Ji, Regev, Levine, et~al.]{li2024derivative}
Xiner Li, Yulai Zhao, Chenyu Wang, Gabriele Scalia, Gokcen Eraslan, Surag Nair, Tommaso Biancalani, Shuiwang Ji, Aviv Regev, Sergey Levine, et~al.
\newblock Derivative-free guidance in continuous and discrete diffusion models with soft value-based decoding.
\newblock \emph{arXiv preprint arXiv:2408.08252}, 2024.

\bibitem[Li et~al.(2025)Li, Uehara, Su, Scalia, Biancalani, Regev, Levine, and Ji]{li2025dynamic}
Xiner Li, Masatoshi Uehara, Xingyu Su, Gabriele Scalia, Tommaso Biancalani, Aviv Regev, Sergey Levine, and Shuiwang Ji.
\newblock Dynamic search for inference-time alignment in diffusion models.
\newblock \emph{arXiv preprint arXiv:2503.02039}, 2025.

\bibitem[Lou et~al.(2024)Lou, Meng, and Ermon]{lou2024discrete}
Aaron Lou, Chenlin Meng, and Stefano Ermon.
\newblock Discrete diffusion modeling by estimating the ratios of the data distribution.
\newblock In \emph{International Conference on Machine Learning}, pages 32819--32848. PMLR, 2024.

\bibitem[Lu et~al.(2023)Lu, Chen, Chen, Su, Li, and Zhu]{lu2023contrastive}
Cheng Lu, Huayu Chen, Jianfei Chen, Hang Su, Chongxuan Li, and Jun Zhu.
\newblock Contrastive energy prediction for exact energy-guided diffusion sampling in offline reinforcement learning.
\newblock In \emph{International Conference on Machine Learning}, pages 22825--22855. PMLR, 2023.

\bibitem[Ma et~al.(2025)Ma, Tong, Jia, Hu, Su, Zhang, Yang, Li, Jaakkola, Jia, et~al.]{ma2025inference}
Nanye Ma, Shangyuan Tong, Haolin Jia, Hexiang Hu, Yu-Chuan Su, Mingda Zhang, Xuan Yang, Yandong Li, Tommi Jaakkola, Xuhui Jia, et~al.
\newblock Inference-time scaling for diffusion models beyond scaling denoising steps.
\newblock \emph{arXiv preprint arXiv:2501.09732}, 2025.

\bibitem[Minsky(1961)]{minsky1961steps}
Marvin Minsky.
\newblock Steps toward artificial intelligence.
\newblock \emph{Proceedings of the IRE}, 49\penalty0 (1):\penalty0 8--30, 1961.

\bibitem[Mohammadpour et~al.(2024)Mohammadpour, Bengio, Frejinger, and Bacon]{mohammadpour2024maximum}
Sobhan Mohammadpour, Emmanuel Bengio, Emma Frejinger, and Pierre-Luc Bacon.
\newblock Maximum entropy gflownets with soft q-learning.
\newblock In \emph{International Conference on Artificial Intelligence and Statistics}, pages 2593--2601. PMLR, 2024.

\bibitem[Morozov et~al.(2024)Morozov, Tiapkin, Samsonov, Naumov, and Vetrov]{morozov2024improving}
Nikita Morozov, Daniil Tiapkin, Sergey Samsonov, Alexey Naumov, and Dmitry Vetrov.
\newblock Improving gflownets with monte carlo tree search.
\newblock \emph{arXiv preprint arXiv:2406.13655}, 2024.

\bibitem[Morris et~al.(2020)Morris, Lifland, Yoo, Grigsby, Jin, and Qi]{morris2020textattackframeworkadversarialattacks}
John~X. Morris, Eli Lifland, Jin~Yong Yoo, Jake Grigsby, Di~Jin, and Yanjun Qi.
\newblock Textattack: A framework for adversarial attacks, data augmentation, and adversarial training in nlp, 2020.
\newblock URL \url{https://arxiv.org/abs/2005.05909}.

\bibitem[Painter et~al.(2023)Painter, Baioumy, Hawes, and Lacerda]{painter2023monte}
Michael Painter, Mohamed Baioumy, Nick Hawes, and Bruno Lacerda.
\newblock Monte carlo tree search with boltzmann exploration.
\newblock \emph{Advances in Neural Information Processing Systems}, 36:\penalty0 78181--78192, 2023.

\bibitem[Prabhudesai et~al.(2023)Prabhudesai, Goyal, Pathak, and Fragkiadaki]{prabhudesai2023aligning}
Mihir Prabhudesai, Anirudh Goyal, Deepak Pathak, and Katerina Fragkiadaki.
\newblock Aligning text-to-image diffusion models with reward backpropagation.
\newblock \emph{arXiv preprint arXiv:2310.03739}, 2023.

\bibitem[Psenka et~al.(2024)Psenka, Escontrela, Abbeel, and Ma]{psenka2024learning}
Michael Psenka, Alejandro Escontrela, Pieter Abbeel, and Yi~Ma.
\newblock Learning a diffusion model policy from rewards via q-score matching.
\newblock In \emph{International Conference on Machine Learning}, pages 41163--41182. PMLR, 2024.

\bibitem[Radford et~al.(2021)Radford, Kim, Hallacy, Ramesh, Goh, Agarwal, Sastry, Askell, Mishkin, Clark, et~al.]{radford2021learning}
Alec Radford, Jong~Wook Kim, Chris Hallacy, Aditya Ramesh, Gabriel Goh, Sandhini Agarwal, Girish Sastry, Amanda Askell, Pamela Mishkin, Jack Clark, et~al.
\newblock Learning transferable visual models from natural language supervision.
\newblock In \emph{International Conference on Machine Learning}, pages 8748--8763. PMLR, 2021.

\bibitem[Rafailov et~al.(2023)Rafailov, Sharma, Mitchell, Manning, Ermon, and Finn]{rafailov2023direct}
Rafael Rafailov, Archit Sharma, Eric Mitchell, Christopher~D Manning, Stefano Ermon, and Chelsea Finn.
\newblock Direct preference optimization: Your language model is secretly a reward model.
\newblock \emph{Advances in Neural Information Processing Systems}, 36:\penalty0 53728--53741, 2023.

\bibitem[Reuss et~al.(2023)Reuss, Li, Jia, and Lioutikov]{reuss2023goal}
Moritz Reuss, Maximilian Li, Xiaogang Jia, and Rudolf Lioutikov.
\newblock Goal-conditioned imitation learning using score-based diffusion policies.
\newblock \emph{arXiv preprint arXiv:2304.02532}, 2023.

\bibitem[Robert et~al.(1999)Robert, Doucet, and Godsill]{robert1999marginal}
Christian~P Robert, Arnaud Doucet, and Simon~J Godsill.
\newblock Marginal map estimation using markov chain monte carlo.
\newblock In \emph{1999 IEEE International Conference on Acoustics, Speech, and Signal Processing. Proceedings. ICASSP99 (Cat. No. 99CH36258)}, volume~3, pages 1753--1756. IEEE, 1999.

\bibitem[Rombach et~al.(2022)Rombach, Blattmann, Lorenz, Esser, and Ommer]{rombach2022high}
Robin Rombach, Andreas Blattmann, Dominik Lorenz, Patrick Esser, and Bj{\"o}rn Ommer.
\newblock High-resolution image synthesis with latent diffusion models.
\newblock In \emph{Proceedings of the IEEE/CVF conference on computer vision and pattern recognition}, pages 10684--10695, 2022.

\bibitem[Ronneberger et~al.(2015)Ronneberger, Fischer, and Brox]{ronneberger2015u}
Olaf Ronneberger, Philipp Fischer, and Thomas Brox.
\newblock U-net: Convolutional networks for biomedical image segmentation.
\newblock In \emph{Medical image computing and computer-assisted intervention--MICCAI 2015: 18th international conference, Munich, Germany, October 5-9, 2015, proceedings, part III 18}, pages 234--241. Springer, 2015.

\bibitem[Saharia et~al.(2022)Saharia, Chan, Saxena, Li, Whang, Denton, Ghasemipour, Gontijo~Lopes, Karagol~Ayan, Salimans, et~al.]{saharia2022photorealistic}
Chitwan Saharia, William Chan, Saurabh Saxena, Lala Li, Jay Whang, Emily~L Denton, Kamyar Ghasemipour, Raphael Gontijo~Lopes, Burcu Karagol~Ayan, Tim Salimans, et~al.
\newblock Photorealistic text-to-image diffusion models with deep language understanding.
\newblock \emph{Advances in Neural Information Processing Systems}, 35:\penalty0 36479--36494, 2022.

\bibitem[Sahoo et~al.(2024)Sahoo, Arriola, Schiff, Gokaslan, Marroquin, Chiu, Rush, and Kuleshov]{sahoo2024simpleeffectivemaskeddiffusion}
Subham~Sekhar Sahoo, Marianne Arriola, Yair Schiff, Aaron Gokaslan, Edgar Marroquin, Justin~T Chiu, Alexander Rush, and Volodymyr Kuleshov.
\newblock Simple and effective masked diffusion language models, 2024.
\newblock URL \url{https://arxiv.org/abs/2406.07524}.

\bibitem[Schuhmann(2022)]{schuhmann2022laion}
Chrisoph Schuhmann.
\newblock Laion aesthetics, 2022.
\newblock URL \url{https://laion.ai/blog/laion-aesthetics/}.

\bibitem[Silver et~al.(2018)Silver, Hubert, Schrittwieser, Antonoglou, Lai, Guez, Lanctot, Sifre, Kumaran, Graepel, et~al.]{silver2018general}
David Silver, Thomas Hubert, Julian Schrittwieser, Ioannis Antonoglou, Matthew Lai, Arthur Guez, Marc Lanctot, Laurent Sifre, Dharshan Kumaran, Thore Graepel, et~al.
\newblock A general reinforcement learning algorithm that masters chess, shogi, and go through self-play.
\newblock \emph{Science}, 362\penalty0 (6419):\penalty0 1140--1144, 2018.

\bibitem[Singhal et~al.(2025)Singhal, Horvitz, Teehan, Ren, Yu, McKeown, and Ranganath]{singhal2025general}
Raghav Singhal, Zachary Horvitz, Ryan Teehan, Mengye Ren, Zhou Yu, Kathleen McKeown, and Rajesh Ranganath.
\newblock A general framework for inference-time scaling and steering of diffusion models.
\newblock \emph{arXiv preprint arXiv:2501.06848}, 2025.

\bibitem[Song et~al.(2020)Song, Meng, and Ermon]{song2020denoising}
Jiaming Song, Chenlin Meng, and Stefano Ermon.
\newblock Denoising diffusion implicit models.
\newblock \emph{The Eighth International Conference on Learning Representations}, 2020.

\bibitem[Song et~al.(2023)Song, Zhang, Yin, Mardani, Liu, Kautz, Chen, and Vahdat]{song2023loss}
Jiaming Song, Qinsheng Zhang, Hongxu Yin, Morteza Mardani, Ming-Yu Liu, Jan Kautz, Yongxin Chen, and Arash Vahdat.
\newblock Loss-guided diffusion models for plug-and-play controllable generation.
\newblock In \emph{International Conference on Machine Learning}, pages 32483--32498. PMLR, 2023.

\bibitem[Song et~al.(2021)Song, Sohl-Dickstein, Kingma, Kumar, Ermon, and Poole]{song2021scorebased}
Yang Song, Jascha Sohl-Dickstein, Diederik~P Kingma, Abhishek Kumar, Stefano Ermon, and Ben Poole.
\newblock Score-based generative modeling through stochastic differential equations.
\newblock In \emph{International Conference on Learning Representations}, 2021.

\bibitem[Tiapkin et~al.(2024)Tiapkin, Morozov, Naumov, and Vetrov]{tiapkin2024generative}
Daniil Tiapkin, Nikita Morozov, Alexey Naumov, and Dmitry~P Vetrov.
\newblock Generative flow networks as entropy-regularized rl.
\newblock In \emph{International Conference on Artificial Intelligence and Statistics}, pages 4213--4221. PMLR, 2024.

\bibitem[Trippe et~al.(2023)Trippe, Yim, Tischer, Baker, Broderick, Barzilay, and Jaakkola]{trippe2023diffusion}
Brian~L Trippe, Jason Yim, Doug Tischer, David Baker, Tamara Broderick, Regina Barzilay, and Tommi~S Jaakkola.
\newblock Diffusion probabilistic modeling of protein backbones in 3d for the motif-scaffolding problem.
\newblock In \emph{The Eleventh International Conference on Learning Representations}, 2023.

\bibitem[Uehara et~al.(2024)Uehara, Zhao, Black, Hajiramezanali, Scalia, Diamant, Tseng, Biancalani, and Levine]{uehara2024fine}
Masatoshi Uehara, Yulai Zhao, Kevin Black, Ehsan Hajiramezanali, Gabriele Scalia, Nathaniel~Lee Diamant, Alex~M Tseng, Tommaso Biancalani, and Sergey Levine.
\newblock Fine-tuning of continuous-time diffusion models as entropy-regularized control.
\newblock \emph{arXiv preprint arXiv:2402.15194}, 2024.

\bibitem[Uehara et~al.(2025)Uehara, Zhao, Wang, Li, Regev, Levine, and Biancalani]{uehara2025inference}
Masatoshi Uehara, Yulai Zhao, Chenyu Wang, Xiner Li, Aviv Regev, Sergey Levine, and Tommaso Biancalani.
\newblock Inference-time alignment in diffusion models with reward-guided generation: Tutorial and review.
\newblock \emph{arXiv preprint arXiv:2501.09685}, 2025.

\bibitem[Vaswani et~al.(2017)Vaswani, Shazeer, Parmar, Uszkoreit, Jones, Gomez, Kaiser, and Polosukhin]{vaswani2017attention}
Ashish Vaswani, Noam Shazeer, Niki Parmar, Jakob Uszkoreit, Llion Jones, Aidan~N Gomez, {\L}ukasz Kaiser, and Illia Polosukhin.
\newblock Attention is all you need.
\newblock \emph{Advances in Neural Information Processing Systems}, 30, 2017.

\bibitem[Venkatraman et~al.(2024)Venkatraman, Jain, Scimeca, Kim, Sendera, Hasan, Rowe, Mittal, Lemos, Bengio, et~al.]{venkatraman2024amortizing}
Siddarth Venkatraman, Moksh Jain, Luca Scimeca, Minsu Kim, Marcin Sendera, Mohsin Hasan, Luke Rowe, Sarthak Mittal, Pablo Lemos, Emmanuel Bengio, et~al.
\newblock Amortizing intractable inference in diffusion models for vision, language, and control.
\newblock In \emph{The Thirty-eighth Annual Conference on Neural Information Processing Systems}, 2024.

\bibitem[Wang et~al.(2023)Wang, Hunt, and Zhou]{wang2023diffusion}
Zhendong Wang, Jonathan~J Hunt, and Mingyuan Zhou.
\newblock Diffusion policies as an expressive policy class for offline reinforcement learning.
\newblock In \emph{The Eleventh International Conference on Learning Representations}, 2023.

\bibitem[Warstadt et~al.(2019)Warstadt, Singh, and Bowman]{warstadt-etal-2019-neural}
Alex Warstadt, Amanpreet Singh, and Samuel~R. Bowman.
\newblock Neural network acceptability judgments.
\newblock \emph{Transactions of the Association for Computational Linguistics}, 7:\penalty0 625--641, 2019.
\newblock \doi{10.1162/tacl_a_00290}.
\newblock URL \url{https://aclanthology.org/Q19-1040/}.

\bibitem[Wu et~al.(2023{\natexlab{a}})Wu, Trippe, Naesseth, Blei, and Cunningham]{wu2023practical}
Luhuan Wu, Brian Trippe, Christian Naesseth, David Blei, and John~P Cunningham.
\newblock Practical and asymptotically exact conditional sampling in diffusion models.
\newblock \emph{Advances in Neural Information Processing Systems}, 36:\penalty0 31372--31403, 2023{\natexlab{a}}.

\bibitem[Wu et~al.(2023{\natexlab{b}})Wu, Sun, Zhu, Zhao, and Li]{wu2023human}
Xiaoshi Wu, Keqiang Sun, Feng Zhu, Rui Zhao, and Hongsheng Li.
\newblock Human preference score: Better aligning text-to-image models with human preference.
\newblock In \emph{Proceedings of the IEEE/CVF International Conference on Computer Vision}, pages 2096--2105, 2023{\natexlab{b}}.

\bibitem[Xiao et~al.(2019)Xiao, Huang, Mei, Schuurmans, and M{\"u}ller]{xiao2019maximum}
Chenjun Xiao, Ruitong Huang, Jincheng Mei, Dale Schuurmans, and Martin M{\"u}ller.
\newblock Maximum entropy monte-carlo planning.
\newblock \emph{Advances in Neural Information Processing Systems}, 32, 2019.

\bibitem[Xu et~al.(2023)Xu, Liu, Wu, Tong, Li, Ding, Tang, and Dong]{xu2023imagereward}
Jiazheng Xu, Xiao Liu, Yuchen Wu, Yuxuan Tong, Qinkai Li, Ming Ding, Jie Tang, and Yuxiao Dong.
\newblock Imagereward: Learning and evaluating human preferences for text-to-image generation.
\newblock \emph{Advances in Neural Information Processing Systems}, 36:\penalty0 15903--15935, 2023.

\bibitem[Xu et~al.(2022)Xu, Yu, Song, Shi, Ermon, and Tang]{xugeo2022diff}
Minkai Xu, Lantao Yu, Yang Song, Chence Shi, Stefano Ermon, and Jian Tang.
\newblock Geodiff: A geometric diffusion model for molecular conformation generation.
\newblock In \emph{International Conference on Learning Representations}, 2022.

\bibitem[Yang et~al.(2023)Yang, Huang, Lei, Zhong, Yang, Fang, Wen, Zhou, and Lin]{yang2023policy}
Long Yang, Zhixiong Huang, Fenghao Lei, Yucun Zhong, Yiming Yang, Cong Fang, Shiting Wen, Binbin Zhou, and Zhouchen Lin.
\newblock Policy representation via diffusion probability model for reinforcement learning.
\newblock \emph{arXiv preprint arXiv:2305.13122}, 2023.

\bibitem[Yoon et~al.(2025)Yoon, Cho, Baek, Bengio, and Ahn]{yoon2025monte}
Jaesik Yoon, Hyeonseo Cho, Doojin Baek, Yoshua Bengio, and Sungjin Ahn.
\newblock Monte carlo tree diffusion for system 2 planning.
\newblock \emph{arXiv preprint arXiv:2502.07202}, 2025.

\bibitem[Zhang et~al.(2025)Zhang, Pan, Feng, and Wu]{zhang2025t}
Tao Zhang, Jia-Shu Pan, Ruiqi Feng, and Tailin Wu.
\newblock T-scend: Test-time scalable mcts-enhanced diffusion model.
\newblock \emph{arXiv preprint arXiv:2502.01989}, 2025.

\bibitem[Ziebart et~al.(2008)Ziebart, Maas, Bagnell, Dey, et~al.]{ziebart2008maximum}
Brian~D Ziebart, Andrew~L Maas, J~Andrew Bagnell, Anind~K Dey, et~al.
\newblock Maximum entropy inverse reinforcement learning.
\newblock In \emph{Aaai}, volume~8, pages 1433--1438. Chicago, IL, USA, 2008.

\bibitem[Ziegler et~al.(2019)Ziegler, Stiennon, Wu, Brown, Radford, Amodei, Christiano, and Irving]{ziegler2019fine}
Daniel~M Ziegler, Nisan Stiennon, Jeffrey Wu, Tom~B Brown, Alec Radford, Dario Amodei, Paul Christiano, and Geoffrey Irving.
\newblock Fine-tuning language models from human preferences.
\newblock \emph{arXiv preprint arXiv:1909.08593}, 2019.

\end{thebibliography}

\clearpage

\appendix

\section{Extended related work}
\label{app:related}

We discussed the main approaches that have been proposed for inference-time alignment of diffusion models in \Cref{sec:pitfalls}. Below, we briefly review three tangentially related areas: fine‐tuning of diffusion models, their use in reinforcement learning, and entropy-regularized variations of Monte Carlo Tree Search.
 
\paragraph{Fine-tuning of diffusion models.}
 To sample from the target distribution $\pi^\ast$ for a fixed, known reward function, one option is to amortize the posterior sampling problem and update the model parameters via fine-tuning. The paradigm mirrors the trajectory of large–language-model alignment \citep{ziegler2019fine,rafailov2023direct}. Supervised preference finetuning trains directly on synthetic pairs scored by a reward model \citep{lee2023aligning,wu2023human}. Some early methods exploit differentiable objectives to back-propagate a single scalar all the way to the noise prediction network \citep{clark2024directly,prabhudesai2023aligning}, whereas more traditional reinforcement learning approaches cast each reverse step as an action and optimize expected reward \citep{black2024training}. To avoid over-optimization of the reward, recent works use KL regularization \citep{fan2023dpok, uehara2024fine, venkatraman2024amortizing}.

\paragraph{Diffusion models in reinforcement learning.}
Since the introduction of diffusion models as powerful frameworks for generative modeling, they have become popular for sampling actions or future states in RL. The earliest successes were in offline imitation learning, where some approaches model trajectories \citep{janner2022planning,ajay2023conditional} or expert policies \citep{chi2023diffusion} from offline datasets. Other works maximize a Q-function in addition to behavior cloning \citep{wang2023diffusion,kang2023efficient}, employ an explicit actor–critic scheme \citep{hansen2023idql}, or treat the critic as an energy function to guide the denoiser \citep{lu2023contrastive}. Some goal-conditioned extensions have also been proposed \citep{reuss2023goal,jain2024learning}. Recent works have explored similar ideas in the online setting \citep{yang2023policy,psenka2024learning,jain2024sampling}. Those methods aim to maximize return for control tasks, while we aim to draw unbiased samples from the reward-tilted distribution for any chosen reward.
 
\paragraph{Entropy-regularized MCTS.}
Monte-Carlo Tree-Search (MCTS) has recently been extended to soft-value objectives that incorporate an entropy bonus \citep{xiao2019maximum}, which uses a log-sum-exp value update and samples actions from a Boltzmann distribution, guaranteeing improved exploration at the cost of converging to the soft rather than the standard optimum. Follow-up work proposed to adapt the entropy term to a predefined value \citep{kozakowski2022planning} and decay the entropy term \citep{painter2023monte}. Very recently, \citet{morozov2024improving} used soft-backup MCTS to improve planning in Generative Flow Networks \citep{bengio2021flow}. Our Diffusion Tree Sampling (\dtsa\!) follows the same Boltzmann selection and soft value backup pattern, it is the first to embed a \emph{pre-trained diffusion kernel} inside the tree and to prove consistency for sampling from the KL-regularised posterior, not just selecting a single high-reward action. In this sense, \dtsa bridges the gap between entropy-regularized MCTS used for control and unbiased posterior sampling required for inference-time alignment of generative models.

\section{Sequential Monte Carlo for diffusion sampling}
\label{app:smc_background}

Many existing methods for inference-time diffusion alignment \citep{wu2023practical,trippe2023diffusion,cardoso2024monte,dou2024diffusion,kim2025test} apply sequential Monte Carlo (SMC) \citep{del2006sequential} to the reverse diffusion chain.
SMC maintains a population of $K$ particles to approximately sample from a sequence of intermediate targets $\{\pi_t(\bx_{t:T})\}_{t=T}^{0}$, culminating in the desired $\pi^{*}(\bx_0) \propto p_\theta(\bx_0)\exp(\lambda r(\bx_0))$. In diffusion alignment, one usually sets
\begin{align}
    \pi_t(\bx_{t:T}) \propto p(\bx_T)\prod_{s=t+1}^{T}p_\theta(\bx_{s-1}\mid\bx_s)\exp \left(\lambda\,\hat v_t(\bx_t)\right),
\end{align}
where $\hat v_t$ is a \emph{potential} approximating the soft value $V_t$. Each SMC iteration for $t = T, T-1, \ldots,0$ has three steps:
\begin{enumerate}[leftmargin=1.8em,itemsep=0.3em]
    \item \textbf{Propagation.}\; Sample particles $\tilde{\bx}_{t-1}^{(k)}\sim q_t(\,\cdot \mid \bx_t^{(k)})$, for $k = 1,\ldots,K$ where $q_t$ is the proposal distribution, often set to be the diffusion transition $p_\theta(\cdot\mid\bx_t)$.
    \item \textbf{Weighting.}\; Assign importance weights
      \begin{align}
          w_{t-1}^{(k)} = \underbrace{\frac{p_\theta(\tilde{\bx}_{t-1}^{(k)}\mid\bx_t^{(k)})}
                           {q_t(\tilde{\bx}_{t-1}^{(k)}\mid\bx_t^{(k)})}}_{\text{importance ratio}} \times \exp\left(\lambda\,\hat v_{t-1}(\tilde{\bx}_{t-1}^{(k)})\right).
        \label{eq:smc_weights}
      \end{align}
      The first factor corrects for using a proposal and the second tilts weights toward high estimated value.
    \item \textbf{Resampling.}\; Resample $\{\tilde{\bx}_{t-1}^{(k)}\}_{k=1}^K$ proportional to $\{w_{t-1}^{(k)}\}_{k=1}^K$ to obtain an equally weighted particle set $\{\bx_{t-1}^{(k)}\}_{k=1}^{K}$ for the next iteration.
\end{enumerate}

Classical SMC guarantees that, as $K\!\to\!\infty$ and if the potentials are exact, the empirical measure $\sum_k w_{0}^{(k)}\delta\left({\bx_{0}^{(k)}}\right)$ converges to the target distribution $\pi^{*}$, where $\delta(x)$ is the Dirac delta at $x$. In practice, however, this repeated sampling procedure can reduce the diversity of samples, especially when the weights have high variance. This results in an \emph{effective sample size} which is much lower than $K$. 

Another major issue when applying SMC to diffusion models is that estimating the soft value function $V_t$ is not straightforward and errors in the approximation can lead to inaccurate sampling. The next subsection discusses the \emph{value‑estimation problems} in more detail.

\section{Connection with Generative Flow Networks}
\label{app:gflownet}

Diffusion Tree Sampling can be viewed as an on-the-fly, non-parametric realization of the ideas behind Generative Flow Network (GFlowNet) \citep{bengio2021flow}. Both frameworks ultimately seek to sample from an unnormalised density:
\begin{align*}
   \pi^\ast(x)=\frac{1}{\mathcal{Z}}\,f(x), \quad \mathcal{Z}=\int f(x)\,dx,
\end{align*}
but they do so with different machinery and at different points in the learning–inference pipeline.

GFlowNets define a probability over complete paths $\tau=(\bs_0\!\to\!\cdots\!\to\!\bs_T=x)$ through
\begin{align*}
   P_\theta(\tau)=\prod_{t=1}^{T} P_\theta(\bs_t\mid \bs_{t-1}),
\end{align*}
and train the parameters~$\theta$ so that the \emph{forward flow} leaving every non-terminal state equals the \emph{backward flow} entering it plus injected terminal reward $r(\bx)= \log f(\bx)$. For the special case of a tree-structured graph, this constraint in log form is a soft Bellman equation \citep{tiapkin2024generative, deleu2024discrete, mohammadpour2024maximum}:
\begin{align*}
   F(\bs)=\frac1\lambda \log\sum_{s'\in\text{Child}(s)} P_\theta(\bs'\mid \bs)\,\exp \left(\lambda F(\bs')\right),
\end{align*}
with $F(\bs)$ the learned log-flow function.

\dtsa satisfies the same soft Bellman recursion (cf. \Cref{eq:soft_bellman}), but does so \emph{without} learning parameters. During tree construction, \dtsa estimates the soft value  $V_t$ by Monte-Carlo log-sum-exp backups; selection then samples children proportionally to $\exp(\lambda\hat{v}_{t-1})$, where $\hat{v}_{t-1}$ is the estimated soft value. Repeated roll-outs make the empirical terminal distribution converge to the reward-tilted posterior $\pi^\ast(\bx_0)\propto p_\theta(\bx_0)\exp\left(\lambda r(\bx_0)\right)$, just as a perfectly trained GFlowNet would.

The key differences between \dtsa and GFlowNets are summarized below.
\begin{itemize}
  \item \textbf{Proposal.}  
        \dtsa uses a \emph{fixed}, pretrained diffusion kernel $p_\theta$ as a proposal, whereas GFlowNets learn their forward policy $P_\theta$.
  \item \textbf{Learning vs. search.}  
        \dtsa performs pure inference without updating any parameters, whereas GFlowNets learn the parameters of the sampler to amortize future sampling.
  \item \textbf{Computational regime.}  
        \dtsa excels when one has a strong prior and large \emph{inference} budget for new rewards; GFlowNets shine when the reward is fixed and repeated queries amortize the \emph{training} cost.
\end{itemize}

Because DTS is a search procedure, it is ideal for adapting a pretrained diffusion model to different unseen reward functions without retraining.  GFlowNets, in contrast, learn a fast parametric sampler for a single reward. 

\section{Proofs and derivations}
\label{app:proof}

\subsection{Derivation of Equations 5 and 6}
\label{app:proof_1}

We derive the recursive relation satisfied by the soft value function as well as the expression for the optimal policy in \Cref{sec:prelim} for completeness. 

\paragraph{Soft value function.} This recursive relation is analogous to the soft Bellman equation in maximum entropy RL \citep{ziebart2008maximum, haarnoja2017reinforcement}.
Starting from the definition of $V_t(\bx_t)$:
\begin{align*}
    V_t(\bx_t) &= \frac{1}{\lambda}\log \mathbb{E}_{p_\theta(\bx_{0:t-1}|\bx_t)}\left[\exp\left(\lambda r(\bx_0)\right)\right] \\
    &= \frac{1}{\lambda}\log \int p(\bx_0, \bx_1, \ldots, \bx_{t-1} | \bx_t) \exp\left(\lambda r(\bx_0)\right) d\bx_0 d\bx_1 \ldots d\bx_{t-1} \\
    &= \frac{1}{\lambda}\log \int p(\bx_0, \bx_1, \ldots \bx_{t-2}| \bx_{t-1})\,p(\bx_{t-1} | \bx_t) \exp\left(\lambda r(\bx_0)\right) d\bx_0 d\bx_1 \ldots d\bx_{t-1} \\
    &= \frac{1}{\lambda}\log \int p(\bx_{t-1} | \bx_t) \underbrace{\left( \int p(\bx_0, \bx_1, \ldots \bx_{t-2}| \bx_{t-1}) \exp\left(\lambda r(\bx_0)\right) d\bx_0 d\bx_1 \ldots d\bx_{t-2} \right)}_{= \exp\left(\lambda V(\bx_{t-1})\right)} d\bx_{t-1} \\
    &= \frac{1}{\lambda}\log \int p(\bx_{t-1} | \bx_t) \exp\left(\lambda V(\bx_{t-1})\right) d\bx_{t-1} = \frac{1}{\lambda}\log \mathbb{E}_{p(\bx_{t-1} | \bx_t)}\left[\exp\left(\lambda V(\bx_{t-1})\right)\right].
\end{align*}

The above relation combined with the terminal condition $V_0(\bx_0)=r(\bx_0)$ gives \Cref{eq:soft_bellman}.

\paragraph{Optimal policy.}
The joint target density over the full chain $(\bx_0,\ldots,\bx_{t-1},\bx_{t})$ is given by:
\begin{align*} 
\pi^\ast(\bx_0,\ldots,\bx_{t-1},\bx_{t}) &= \frac{1}{\mathcal{Z}}\;
p_\theta(\bx_0,\ldots,\bx_{t-1},\bx_{t})\,\exp\left(\lambda r(\bx_0)\right),
\end{align*}
where $\mathcal{Z}$ represent the normalization constant of this joint density.

The marginal joint density of $(\bx_t,\bx_{t-1})$ under $\pi^\ast$ is:
\begin{align*}
    \pi^\ast(\bx_{t},\bx_{t-1}) &= \frac{1}{\mathcal{Z}}\int p_\theta(\bx_0,\ldots,\bx_{t-1},\bx_{t}) \exp\left(\lambda r(\bx_0)\right) d\bx_0 \ldots d\bx_{t-2} \\
    &= \frac{1}{\mathcal{Z}}p_\theta(\bx_t) p_\theta(\bx_{t-1}\mid \bx_t) \left(\int p_\theta(\bx_0,\ldots,\bx_{t-2} \mid \bx_{t-1}) \exp\left(\lambda r(\bx_0)\right) d\bx_0 \ldots d\bx_{t-2}\right)\\
    &= \frac{1}{\mathcal{Z}}p_\theta(\bx_t) p_\theta(\bx_{t-1}\mid \bx_t) \exp\left(\lambda V(\bx_{t-1})\right)
\end{align*}
Similarly, the marginal density of $\bx_t$ under $\pi^\ast$ is:
\begin{align*}
    \pi^\ast(\bx_{t}) = \frac{1}{\mathcal{Z}}\; p_\theta(\bx_t) \exp\left(\lambda V(\bx_{t})\right)
\end{align*}
By dividing these two marginals, we get the transitions under the optimal policy:
\begin{align*}
    \pi^\ast(\bx_{t-1}\mid\bx_t) &= \frac{\pi^\ast(\bx_t,\bx_{t-1})}{\pi^\ast(\bx_t)}
    = \frac{p_\theta(\bx_{t-1}\mid \bx_t) \exp\left(\lambda V(\bx_{t-1})\right)}{\exp\left(\lambda V(\bx_{t})\right)} \\
    &= \frac{p_\theta(\bx_{t-1}\mid\bx_t)\exp\left(\lambda V_{t-1}(\bx_{t-1})\right)}{\int p_\theta(\bx_{t-1}\mid\bx_t)\exp\left(\lambda V_{t-1}(\bx_{t-1})\right) d\bx_{t-1}}.
\end{align*}
The above relation gives the optimal policy from \Cref{eq:optimal_policy}.

\subsection{Proof of Proposition 1}
\label{app:proof_2}

\addtocounter{proposition}{-1}
\begin{proposition}[Asymptotic consistency]
\label{prop:mcdt_proof_app}
Let $r$ be bounded and $\lambda > 0$, then \dtsa produces a sequence of terminal states whose empirical distribution converges to the optimal policy $\pi^\ast$ as the number of tree iterations $M\to\infty$.
\end{proposition}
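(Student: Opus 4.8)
The plan is to show that, in the $M\to\infty$ limit, the value estimate $\hat v(\bx_t)$ at every node converges to the true soft value $V_t(\bx_t)$ of \Cref{eq:soft_value}, so that the stochastic selection rule at each node converges to the optimal transition kernel $\pi^\ast_t$ of \Cref{eq:optimal_policy}; a root-to-leaf sample drawn by pure selection then follows $\pi^\ast$ by telescoping. I would proceed by backward induction on the diffusion depth $t$. The base case is immediate: every terminal node carries the exact value $\hat v(\bx_0)=r(\bx_0)=V_0(\bx_0)$, and because every rollout path is appended to $\mathcal{T}$, complete root-to-leaf paths exist for selection to traverse.

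The inductive step rests on two facts that must hold jointly. First I would show that every node in the support of $\pi^\ast$ is selected infinitely often. Since $r$ is bounded, the soft Bellman recursion (\Cref{eq:soft_bellman}) keeps each $V_t$ bounded, so the Boltzmann weights $\exp(\lambda\hat v(\cdot))$ stay bounded away from $0$ and $\infty$ and no existing child is starved of visits. Infinite visitation combined with progressive widening, $B(\bx_t)=C\,N(\bx_t)^\alpha$ with $\alpha\in(0,1)$, forces $|\mathcal{C}(\bx_t)|\to\infty$, so the children of $\bx_t$ form an ever-denser i.i.d. sample from the base kernel $p_\theta(\cdot\mid\bx_t)$ (both expansion and rollout draw children from $p_\theta$).

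Assuming inductively that the estimates at depth $t-1$ satisfy $\hat v(\bx_{t-1})\to V_{t-1}(\bx_{t-1})$, the soft backup at $\bx_t$ forms the empirical log-sum-exp $\tfrac1\lambda\log\bigl(\tfrac1m\sum_{i=1}^m\exp(\lambda\hat v(c_i))\bigr)$ over the $m=|\mathcal{C}(\bx_t)|$ children $c_i\sim p_\theta(\cdot\mid\bx_t)$, which is a consistent Monte-Carlo estimator of $\tfrac1\lambda\log\mathbb{E}_{p_\theta(\bx_{t-1}\mid\bx_t)}[\exp(\lambda V_{t-1})]=V_t(\bx_t)$. Boundedness of $r$ is essential: it makes $\exp(\lambda V_{t-1})$ bounded, so the strong law applies and the outer $\log$ is Lipschitz on the relevant range, yielding $\hat v(\bx_t)\to V_t(\bx_t)$ without error blow-up across the $T$ levels. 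With accurate values and a dense child set, the normalized selection probabilities converge to $\pi^\ast_t(\bx_{t-1}\mid\bx_t)=p_\theta(\bx_{t-1}\mid\bx_t)\exp(\lambda V_{t-1})/\exp(\lambda V_t)$.

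Finally I would telescope. Composing the root step $\pi^\ast(\bx_T\mid\bx_{T+1})\propto p(\bx_T)\exp(\lambda V_T(\bx_T))$ with the steps $\pi^\ast_t(\bx_{t-1}\mid\bx_t)$ and cancelling the intermediate $\exp(\lambda V_t)$ factors leaves $p_\theta(\bx_{0:T})\exp(\lambda r(\bx_0))/Z$, where the root normalizer supplies $Z=\exp(\lambda V_{T+1})$; marginalizing over $\bx_{1:T}$ gives $\pi^\ast(\bx_0)$, so the empirical distribution of selection-drawn terminals converges to $\pi^\ast$. The main obstacle is the coupling of the two limits controlled by the single budget $M$: the parent backup needs both many children (widening) and accurate child values (mature subtrees) at once. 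The delicate part is verifying that Boltzmann selection allocates enough visits to enough children for both to hold simultaneously, and that the log-sum-exp backup's Lipschitz (contraction-like) behaviour prevents the per-level estimation error from compounding as it propagates through all $T$ depths — both of which I would control using the boundedness of $r$.
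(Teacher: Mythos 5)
Your proposal is correct and its backbone is the same as the paper's: identify the selection rule at each node as the Boltzmann reweighting $p_\theta(\bx_{t-1}\mid\bx_t)\exp(\lambda\hat v(\bx_{t-1}))/\exp(\lambda\hat v(\bx_t))$, telescope the product of these transitions over $t$ so that all intermediate $\exp(\lambda\hat v(\bx_t))$ factors cancel, and marginalize to obtain $p_\theta(\bx_0)\exp(\lambda r(\bx_0))/Z$. Where you genuinely diverge is in how the gap between $\hat v$ and the true soft value $V_t$ is handled. The paper's proof carries out the telescoping with the estimates $\hat v$ in place and then disposes of the gap in a single sentence ("in the limit of infinite rollouts, these value estimates approach the true soft values"). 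You instead make that the centerpiece: a backward induction over depth in which (i) boundedness of $r$ keeps the Boltzmann weights bounded away from $0$ and $\infty$ so every child is visited infinitely often, (ii) progressive widening then forces $|\mathcal{C}(\bx_t)|\to\infty$ with children i.i.d.\ from $p_\theta(\cdot\mid\bx_t)$, and (iii) the log of the empirical mean of $\exp(\lambda\hat v)$ is a consistent estimator of $V_t$ whose Lipschitz outer $\log$ prevents error blow-up across the $T$ levels. This buys a proof that actually addresses the delicate point, and you are right to flag the coupling of the two limits (child count versus subtree maturity under a single budget $M$) as the remaining technical work; the paper does not address it either. One small but substantive point in your favour: you write the backup as $\frac1\lambda\log\bigl(\frac1m\sum_i\exp(\lambda\hat v(c_i))\bigr)$, i.e.\ the log of an \emph{average}, which is the correct Monte Carlo estimator of the soft Bellman expectation in \Cref{eq:soft_bellman} and is what the normalization $\exp(\lambda\hat v(\bx_t))$ in the telescoping step requires; the paper's algorithm box states the backup as an unnormalized log-sum-exp, and its proof implicitly relies on the averaged form.
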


\begin{proof}
We use $p(\cdot \mid \bx_t)$ to denote a general proposal distribution. For application to diffusion alignment, this would correspond to transitions under the pretrained model $p_\theta(\cdot \mid \bx_t)$. Additionally, we use $\hat{q}(\cdot \mid \bx_t)$ to denote the transition density of \dtsa.
\begin{description}[leftmargin=0em,itemsep=0.3em,labelsep=0em]
\item \textbf{Step 1: Transition probability under \dtsa.}
Recall that under \dtsa, given a node \(\bx_t\), we create each child by sampling from the base model $p(\cdot \mid \bx_t)$. During tree traversal, we select the next state $\bx_{t-1}$ proportional to the exponentiated soft value function. Thus, the transition probability of \dtsa from $\bx_t$ to $\bx_{t-1}$ is given by:
\begin{align}
    \hat{q}(\bx_{t-1} \mid \bx_t) = \frac{p(\bx_{t-1} \mid \bx_t) \exp\left(\lambda \hat{v}(\bx_{t-1})\right)}{\int p(\bx_{t-1} \mid \bx_t)\exp\left(\lambda \hat{v}(\bx_{t-1})\right) \,d\bx_{t-1}} = \frac{p(\bx_{t-1} \mid \bx_t) \exp\left(\lambda \hat{v}(\bx_{t-1})\right)}{\exp\left(\lambda \hat{v}(\bx_t)\right)},
\end{align}
where the second equality follows from the definition of the soft Bellman equation:
\begin{align*}
    \hat{v}(\bx_t) = \frac{1}{\lambda} \log \mathbb{E}_{\bx_{t-1}\sim p(\cdot \mid \bx_t)}\left[\exp\left(\lambda \hat{v}(\bx_{t-1})\right)\right].
\end{align*}
\item \textbf{Step 2: Joint density of trajectory.}
Recall that the root node of \dtsa contains a dummy state $\bx_{T+1}$ that transitions to the diffusion process prior $\hat{q}(\bx_T \mid \bx_{T+1}) = \mathcal{N}(0,I)$. Then, the joint density of a full trajectory $ \{\bx_T, \bx_{T-1}, \dots, \bx_0\}$ under \dtsa is given by:
\begin{align*}
    \hat{q}(\bx_{T},\bx_{T-1},\ldots,\bx_0) &= \prod_{t=1}^{T+1} \hat{q}(\bx_{t-1} \mid \bx_t) = \prod_{t=1}^{T+1} \frac{p(\bx_{t-1} \mid \bx_t)\exp\left(\lambda \hat{v}(\bx_{t-1})\right)}{\exp\left(\lambda \hat{v}(\bx_t)\right)}\\
    &= \frac{\exp\left(\lambda \hat{v}(\bx_0)\right)}{\exp\left(\lambda \hat{v}(\bx_{T+1})\right)}\prod_{t=1}^{T+1}p(\bx_{t-1} \mid \bx_t) \\
    &= \frac{\exp\left(\lambda \hat{v}(\bx_0)\right)}{\exp\left(\lambda \hat{v}(\bx_{T+1})\right)}p(\bx_{T},\bx_{T-1},\ldots,\bx_0).
\end{align*}
\item \textbf{Step 3: Marginalizing.}
Marginalizing over intermediate states \(\bx_1, \dots, \bx_T\), we get the distribution of terminal state \(\bx_0\):
\begin{align*}
    \hat{q}(\bx_0) &= \int \hat{q}(\bx_{T},\bx_{T-1},\ldots,\bx_0) \,d\bx_T d\bx_{T-1}\ldots d\bx_1 \\
    &= \frac{\exp\left(\lambda \hat{v}(\bx_0)\right)}{\exp\left(\lambda \hat{v}(\bx_{T+1})\right)}\int p(\bx_{T},\bx_{T-1},\ldots,\bx_0) \,d\bx_T d\bx_{T-1}\ldots d\bx_1\\
    &= \frac{\exp\left(\lambda \hat{v}(\bx_0)\right)}{\exp\left(\lambda \hat{v}(\bx_{T+1})\right)} p(\bx_0).
\end{align*}
By definition, the soft value function at the terminal node is $\hat{v}(\bx_0)=r(\bx_0)$. Plugging this and using the definition of value function from \Cref{eq:soft_value}, we have:
\begin{align*}
    \hat{q}(\bx_0) &= \frac{\exp\left(\lambda r(\bx_0)\right)p(\bx_0)}{\int p(\bx_{T},\bx_{T-1},\ldots,\bx_0 \mid \bx_{T+1}) \exp\left(\lambda r(\bx_{0})\right) \,d\bx_T d\bx_{T-1}\ldots d\bx_1 d\bx_0} \\
    &= \frac{\exp\left(\lambda r(\bx_0)\right)p(\bx_0)}{\int p(\bx_0) \exp\left(\lambda r(\bx_{0})\right) \,d\bx_0}.
\end{align*}
This has the form of the target distribution in \Cref{eq:optimal_policy}, except that it uses the value estimates $\hat{v}$ that are calculated based on rollouts starting from each state $\bx_t$. In the limit of infinite rollouts, these value estimates approach the true soft values, confirming that the sampling distribution \(\hat{q}\) from \dtsa exactly matches the target distribution $\pi^*$. 

Therefore, \dtsa is consistent, as it correctly generates samples from the desired target distribution asymptotically.
\end{description}
\end{proof}

\section{DTS and $\text{DTS}^\star$ algorithm}
\label{app:algo}

\begin{algorithm}[H]
\caption{Diffusion Tree Sampling (\dtsa\!) and Diffusion Tree Search (\dtse\!)}
\begin{algorithmic}[1]
\STATE {\bf Input:} base policy $p_\theta$, reward function $r$, number of iterations $M$, inverse temperature $\lambda$, parameters $C, \alpha, c_\text{uct}$
\STATE Initialize root node $\bx_{T+1}$ with dummy value, $\hat{v}(\bx_{T+1}) = 0$, $N(\bx_{T+1})=1$
\STATE Initialize tree $\mathcal{T}$ with root node $\bx_{T+1}$
\FOR{$m=1,\dots,M$}
\STATE $P \leftarrow \{\bx_{T+1}\}$
\STATE Set $t \leftarrow {T+1}$
\STATE \texttt{\textcolor{OliveGreen}{// Selection}}
\WHILE{$|\mathcal{C}(\bx_t)| \geq C \cdot N(\bx_t)^\alpha$ and $t>0$}
\STATE {\color{NavyBlue}\textbf{[\dtsa\!]}} select child probabilistically:
$
\bx_{t-1} \sim \frac{\exp(\lambda \hat{v}(\bx_{t-1}))}{\sum_{\bx'\in\mathcal{C}(\bx_t)} \exp(\lambda \hat{v}(\bx'))}
$
\STATE {\color{BrickRed}\textbf{[\dtse\!]}} select child maximizing UCT:
$
\bx_{t-1} = \arg\max_{\bx'\in\mathcal{C}(\bx_t)} \hat{v}(\bx') + c_\text{uct}\sqrt{\frac{\log N(\bx_t)}{N(\bx')}}
$
\STATE $P \leftarrow P \cup \{\bx_{t-1}\}$
\STATE $t \leftarrow t-1$
\ENDWHILE
\STATE \texttt{\textcolor{OliveGreen}{// Expansion: \!\!\!\!expand $\bx_t$ by sampling a new child}}
\IF{$t>0$, and $|\mathcal{C}(\bx_t)| < C \cdot N(\bx_t)^\alpha$}
\STATE \texttt{\textcolor{OliveGreen}{// Rollout: \!\!\!\!from new node $\bx_{t-1}$ sample rollout path to terminal $\bx_0$}}
\WHILE{$t>0$}
\STATE $\bx_{t-1} \sim p_\theta(\cdot\mid \bx_t),\quad \hat{v}(\bx_{t-1})=0,\quad N(\bx_{t-1})=1$
\STATE $P \leftarrow P \cup \{\bx_{t-1}\}$
\STATE $t \leftarrow t-1$
\ENDWHILE
\ENDIF
\STATE Evaluate terminal reward: $\hat{v}(\bx_0)=r(\bx_0)$
\STATE \texttt{\textcolor{OliveGreen}{// Backup: \!\!\!\!update value along path $P$}}
\FOR{$t=0,\dots,T$}
\STATE Soft backup:
$
\hat{v}(\bx_{t+1}) \leftarrow \frac{1}{\lambda}\log\sum_{\bx_t\in\mathcal{C}(\bx_{t+1})}\exp(\lambda\hat{v}(\bx_t))
$
\STATE Update visits: $N(\bx_{t+1}) \leftarrow N(\bx_{t+1}) + 1$
\ENDFOR
\ENDFOR
\STATE \textbf{return} $\mathcal{T}$
\end{algorithmic}
\end{algorithm}

\section{Implementation details for DTS and $\text{DTS}^\star$}
\label{app:implementation}

\subsection{Tree structure}
\label{app:implementation_tree}

The algorithm presented in \Cref{sec:dts_sampling} and \Cref{app:algo} allows every state $\bx_t$ along the denoising trajectory to be considered for branching. However, in practice, we only branch every few timesteps. We noticed very little difference in performance between the two cases for the same number of function evaluations, however, we expect branching at every step to outperform for a very high compute budget. We match the tree branching schedule with the resampling schedule for all baselines with SMC, similar to the setting from \citet{singhal2025general}. The exact setting for each experiment is presented in \Cref{tab:steps}, where we always branch at the root node corresponding to $t = T$.

\begin{table}[h!]
\caption{Branching schedule for \dtsa and \dtse, which is also the resampling schedule used for SMC-based methods -- SMC/FK \citep{singhal2025general}, TDS \citep{wu2023practical}, DAS \citep{kim2025test}.}
\label{tab:steps}
\centering
\begin{tabular}{%
  l c c
}
\toprule
Domain & Total denoising steps  &  Branching schedule\\
\midrule
        Two-dimensional & $100$ & $100 (\operatorname{root}), 80, 60, 40, 20$ \\
        Image pixels (MNIST, CIFAR-10) & $50$ & $50 (\operatorname{root}), 40, 30, 20, 10$ \\
        Image latents (SD-v1.5) & $50$ & $50 (\operatorname{root}), 40, 30, 20, 10$ \\
        Text tokens (MDLM) & $64$ & $64 (\operatorname{root}), 54, 44, 34, 24, 14$ \\ 
\bottomrule
\end{tabular}
\end{table}

Apart from this, we have hyperparameters associated with progressive widening that control the maximum number of branches at any node. We used $\alpha = 0.8$ and $C = 2$ for all two-dimensional and image experiments and $\alpha = 0.7$ and $C = 2$ for text generation. There is a scope of improving the performance of \dtsa and \dtse further by tuning these parameters for specific tasks.

\subsection{Experiment details}
\label{app:implementation_exp}

\begin{expblock}[]{Illustrative 2D}
    \begin{description}[leftmargin=*,itemsep=0.3em,labelsep=0em]
        \item \textbf{Base diffusion model:} The denoising network is an MLP that takes as input the 2-dimensional data $\bx_t$ and the timestep $t$ and outputs a 2-dimensional noise prediction. The timestep is transformed using sinusoidal embeddings \citep{vaswani2017attention}. The network has four hidden layers of $128$ dimension each with the sigmoid linear unit (SiLU, \citep{hendrycks2016gaussian}) activation. We used the linear noise schedule with $\beta_\textrm{min} = 0.001$ and $\beta_\textrm{max} = 0.07$ and the score matching objective. The optimizer used for training was Adam \citep{kingma2014adam} with a learning rate of $3\times 10^{-3}$. We train the model for $500$ epochs on a training set of $10000$ samples. 
        \item \textbf{Reward function:} \textit{Gaussian mixture:} The reward function is:
        \begin{align*}
            r(\bx)= \log\left(\displaystyle\sum_{i=1}^{8} w_i\,\exp\!\left(-\|\mathbf x-\boldsymbol{\mu}_i\|^{2}/2\sigma^2\right)\right),
        \end{align*}
        where $w_i = \exp(1.5\,i),\;\;\boldsymbol{\mu}_i = 4 \left(\cos\tfrac{2\pi(i-1)}{8},\;\sin\tfrac{2\pi(i-1)}{8}\right),\;\; i=1,\dots,8$, with $\sigma=0.3$.
        \textit{Checkerboard:} The reward is negative distance from the center $r(\bx) = -0.5 \|\bx\|^2$.
    \end{description}
\end{expblock} 

\begin{expblock}{Class-conditional MNIST}
    \begin{description}[leftmargin=*,itemsep=0.3em,labelsep=0em]
        \item \textbf{Base diffusion model:} The denoising network is a Unet architecture \citep{ronneberger2015u} that operates on images of size $32 \times 32 \times 1$ (upscaled from $28 \times 28 \times 1$) with block channels $\{32,64,128,256\}$. We use the DDIMScheduler\footnote{\url{https://huggingface.co/docs/diffusers/en/api/schedulers/ddim}} from diffusers library with default parameters, except we set $\eta = 1.0$ so the inference process is stochastic like DDPMs \citep{ho2020denoising}. We use the AdamW optimizer with a learning rate of $10^{-4}$ for $100$ epochs on the MNIST training set.
        \item \textbf{Reward function:} We train a classifier $p(c\mid \bx)$ on the MNIST training set. The classifier is a convolutional neural network \citep{lecun2015deep} using two $5\times 5$ kernels with $(16,32)$ channels followed by $2\times 2$ max pooling operation with ReLU activations. The features are then flattened and followed by a linear layer with $10$ outputs corresponding to the classes. The network was trained using Adam optimizer with learning rate $10^{-3}$. The reward function for single class generation is the log likelihood of the class $r_i(\bx) = \log p(c = i \mid \bx)$ for $i \in \{0,1,\ldots,9\}$. For the even or odd generation, it is defined as $r(\bx) = \max_{i \in \mathcal{S}} \log p(c = i \mid \bx)$, where $\mathcal{S} = \{0,2,4,6,8\}$ for even digit generation and $\mathcal{S} = \{1,3,5,7,9\}$ for odd digit generation.
    \end{description}
\end{expblock}

\begin{expblock}{Class-conditional CIFAR-10}
    \begin{description}[leftmargin=*,itemsep=0.3em,labelsep=0em]
        \item \textbf{Base diffusion model:} We used the pre-trained diffusion model \texttt{ddpm-cifar10-32}\footnote{\url{https://huggingface.co/google/ddpm-cifar10-32}} from Hugging Face, which uses a Unet architecture and diffuses over $32\times 32\times 3$ images in pixel-space. We use the DDIMScheduler with $\eta = 1.0$ for stochastic denoising.
        \item \textbf{Reward function:}  We train a classifier $p(c\mid \bx)$ on the CIFAR-10 training set. The classifier uses a ResNet‑18 \citep{he2016deep} backbone that outputs an embedding which is average pooled, flattened, and passed to a single linear layer with $10$ outputs. The network is trained using Adam optimizer with learning rate $10^{-3}$. Similar to MNIST single class generation, the reward function is the log likelihood of the class $r_i(\bx) = \log p(c = i \mid \bx)$ for $i \in \{0,1,\ldots,9\}$.
    \end{description}
\end{expblock} 

\begin{expblock}{Text-to-image}
    \begin{description}[leftmargin=*,itemsep=0.3em,labelsep=0em]
        \item \textbf{Base diffusion model:}  We use Stable Diffusion v1.5\footnote{\url{https://huggingface.co/stable-diffusion-v1-5/stable-diffusion-v1-5}} from Hugging Face, which is a latent diffusion model \citep{rombach2022high}. The diffusion process is defined over $64\times 64\times 4$ latent variables, which are obtained by encoding $512\times 512\times 3$ images using a variational autoencoder. The model uses CLIP \citep{radford2021learning} to encode text prompts into embeddings which are then used to condition the generative process via classifier-free guidance \citep{ho2021classifier}. We use the DDIMScheduler with $\eta = 1.0$.
        \item \textbf{Reward function:}  We use pre-trained models as reward functions including ImageReward\footnote{\url{https://github.com/THUDM/ImageReward}} $r(\bx, \by)$ that encodes prompt accuracy as well as human preferences the LAION aesthetic score predictor\footnote{\url{https://github.com/LAION-AI/aesthetic-predictor}} $r(\bx)$ that encodes aesthetic quality of an image.
    \end{description}
\end{expblock} 

\begin{expblock}{Conditional text}
    \begin{description}[leftmargin=*,itemsep=0.3em,labelsep=0em]
        \item \textbf{Base diffusion model:} We use MDLM\footnote{\url{https://huggingface.co/kuleshov-group/mdlm-owt}} for our text generation experiments. This is a discrete diffusion model with $110$M parameters that directly predicts the tokens. We define the diffusion process over a context length of $64$ tokens with $64$ sampling steps and use the standard discrete unmasking update for stochastic denoising.
        \item \textbf{Reward function:}  We use a BERT-based classifier\footnote{\url{https://huggingface.co/textattack/roberta-base-CoLA}} trained on the Corpus of Linguistic Accepatbility (CoLA) \citep{warstadt-etal-2019-neural}. This reward function $r(\bx)$ encodes the linguistic acceptability of a given string $\bx$. The reward is the log probability of the text being "acceptable". We find this model to be more robust to reward-hacking than alternatives. 
    \end{description}
\end{expblock} 

\subsection{Compute}
\label{app:implementation_compute}

We report execution times on a single A100 GPU with 80 gigabytes of memory.
\begin{itemize}[leftmargin=1.8em,itemsep=0.3em]
    \item Each 2D experiment including all methods runs in 15 minutes. Adding up the time over five seeds and two different datasets, the combined run time is approximately 2.5 GPU hours.
    \item The MNIST and CIFAR-10 class-conditional experiments use approximately 3 GPU hours per class including all methods. Over all 22 tasks (10 MNIST single digit + 2 MNIST even/odd + 10 CIFAR-10 classes) equals approximately 66 GPU hours.
    \item The text-to-image experiments using Stable Diffusion v1.5 require roughly 30 minutes per prompt across all methods. Adding up all 200 prompts from DrawBench and 45 animal prompts, reproducing all experiments requires approximately 123 GPU hours.
    \item The text generation experiments using MDLM requires roughly 30 minutes per prompt. Thus, generating 3 completions per prompt for the 15 prompts requires roughly 22.5 GPU hours.
\end{itemize}

\section{Details of baselines}
\label{app:baselines}

We re-implemented all baseline methods in our unified codebase since most of them use SMC as a backbone and share the same underlying infrastructure. Each implementation was validated by reproducing the quantitative results reported in its original paper. \Cref{app:smc_background} provides a concise primer on SMC for reference. The complete source code including all baselines will be released publicly upon publication of this work.

\paragraph{DPS.} Diffusion Posterior Sampling \citep{chung2023dps} was originally proposed for noisy inverse problems such as image super-resolution and de-blurring using the gradient of the final objective. To adapt this method for general reward functions, we make a minor modification by replacing the gradient of the inverse problem objective with the gradient of the reward function:
\begin{align}
    \tilde{\bx}_{t-1} \sim \mathcal{N}\left(\mu_{\theta}(\bx_{t},t)\,+\,\lambda\,\sigma_{t}^{2}\,\nabla_{\bx_t} r(\hat{\bx}_0(\bx_t)), \sigma_t^2\,I\right),
    \label{eq:dps}
\end{align}
where $\hat{\bx}_0$ is obtained using Tweedie's formula (cf. \Cref{sec:pitfalls}), $\mu_\theta$ is the predicted mean of the base diffusion model, and $r(\bx)$ is the reward function in the two-dimensional experiments and classifier log likelihoods $\log p(c = i \mid \bx)$ for class-conditional image experiments. The official implementation is provided \href{https://github.com/DPS2022/diffusion-posterior-sampling}{here}.

\paragraph{SMC/FK-Steering.} 
In our paper, SMC refers to the simplest variant, FK-Steering \citep{singhal2025general}, which defines different weighting schemes and uses the pre-trained diffusion model as the proposal distribution. As per the setting in \citet{singhal2025general}, we perform the resampling step at fixed intervals during denoising (given in \Cref{tab:steps}) and use adaptive resampling to increase diversity of generated samples. Our sampling experiments (two-dimensional and class-conditional image generation) use the \texttt{`diff'} potential with $\lambda = 1.0$, whereas the search experiments (text-to-image and text generation) use the \texttt{`max'} potential with $\lambda = 10.0$. The weights for resampling are given by \Cref{eq:smc_weights} where the proposal is equal to the pre-trained diffusion transition and the value estimates are equal to:
\begin{align*}
      \hat{v}_{t-1}^{\text{diff}}(\tilde{\bx}_{t-1}) &= r \left(\hat{\bx}_0(\tilde{\bx}_{t-1})\right) - r \left(\hat{\bx}_0(\bx_t)\right), \quad
      \hat{v}_{T}^{\text{diff}}(\bx_{T}) = r \left(\hat{\bx}_0(\bx_{T})\right).\\
  \hat{v}_{t-1}^{\text{max}}(\tilde{\bx}_{t-1}) &= \max \left\{ r(\hat{\bx}_0(\tilde{\bx}_{t-1})),\,m_t^{(k)} \right\},
  \quad
  m_{t}^{(k)} = \max_{s\ge t} r \left(\hat{\bx}_0(\bx_s^{(k)})\right).
\end{align*}
We adapted the official implementation provided \href{https://github.com/zacharyhorvitz/Fk-Diffusion-Steering}{here}.

\paragraph{TDS.} Twisted Diffusion Sampler \citep{wu2023practical} comprises of a ``twisted'' proposal which is used along with SMC to sample from the target posterior distribution. For general reward functions, the twisted proposal is the same as the one used in \Cref{eq:dps} and the final weights are obtained using \Cref{eq:smc_weights} after plugging in the twisted proposal and the value estimates:
\begin{align*}
  &q_t(\tilde{\bx}_{t-1}\mid\bx_t) = \mathcal{N}\left(\tilde{\bx}_{t-1}\,;\,\mu_{\theta}(\bx_{t},t)\,+\,\lambda\,\sigma_{t}^{2}\,\nabla_{\bx_t} r(\hat{\bx}_0(\bx_t)), \sigma_t^2\,I\right).\\
  &\hat{v}_{t-1}(\tilde{\bx}_{t-1}) = r \left(\hat{\bx}_0(\tilde{\bx}_{t-1})\right) - r \left(\hat{\bx}_0(\bx_t)\right), \quad \hat{v}_{T}(\bx_{T}) = r \left(\hat{\bx}_0(\bx_{T})\right).
\end{align*}
The official implementation is provided \href{https://github.com/blt2114/twisted_diffusion_sampler}{here}.

\paragraph{DAS.} Diffusion Alignment as Sampling \citep{kim2025test} re‑uses the twisted proposal of TDS but multiplies the reward term by a monotone tempering schedule
$0=\gamma_T\!\le\!\gamma_{T-1}\!\le\!\dots\!\le\!\gamma_0=1$ to reduce the bias from inaccurate value estimates at high noise levels. The weights are given by \Cref{eq:smc_weights} after plugging in the tempered proposal and value estimates:
\begin{align*}
  &q_t(\tilde{\bx}_{t-1}\mid\bx_t) = \mathcal{N}\left(\tilde{\bx}_{t-1}\,;\,\mu_{\theta}(\bx_{t},t)\,+\,\lambda\,\gamma_{t-1}\,\sigma_{t}^{2}\,\nabla_{\bx_t} r(\hat{\bx}_0(\bx_t)), \sigma_t^2\,I\right).\\
  &\hat{v}_{t-1}(\tilde{\bx}_{t-1}) = \gamma_{t-1}\,r \left(\hat{\bx}_0(\tilde{\bx}_{t-1})\right) - \gamma_{t}\,r \left(\hat{\bx}_0(\bx_t)\right), \quad \hat{v}_{T}(\bx_{T}) = \gamma_T\,r \left(\hat{\bx}_0(\bx_{T})\right).
\end{align*}
The official implementation is provided \href{https://github.com/krafton-ai/DAS}{here}.

\section{Additional experimental results}
\label{app:exp}




\FloatBarrier
\subsection{Class-conditional image experiments}
\label{app:exp_img}

We supplement \Cref{tab:img_results} and \Cref{fig:img_samples} with additional results and samples. 

We plot all four metrics -- FID, CMMD, average log rewards and average diversity -- across different number of function evaluations (NFEs) for the three settings considered in \Cref{sec:img_exp}. The plots show that across the three settings for most values of NFEs, \dtsa matches the target distribution more accurately compared to other methods (lowest FID and CMMD).

We also present random samples for each method and setting in \Crefrange{fig:mnist_all}{fig:cifar10_all}. We observe the same trend as noticed in \Cref{fig:img_samples} -- gradient-based guidance like DPS can be unstable leading to unnatural images, while SMC-based methods show signs of mode collapse with low average diversity and high average rewards. \dtsa balances both diversity and high rewards effectively by closely matching the true posterior distribution.


\begin{figure}[h!]
    \centering
    \includegraphics[width=\textwidth]{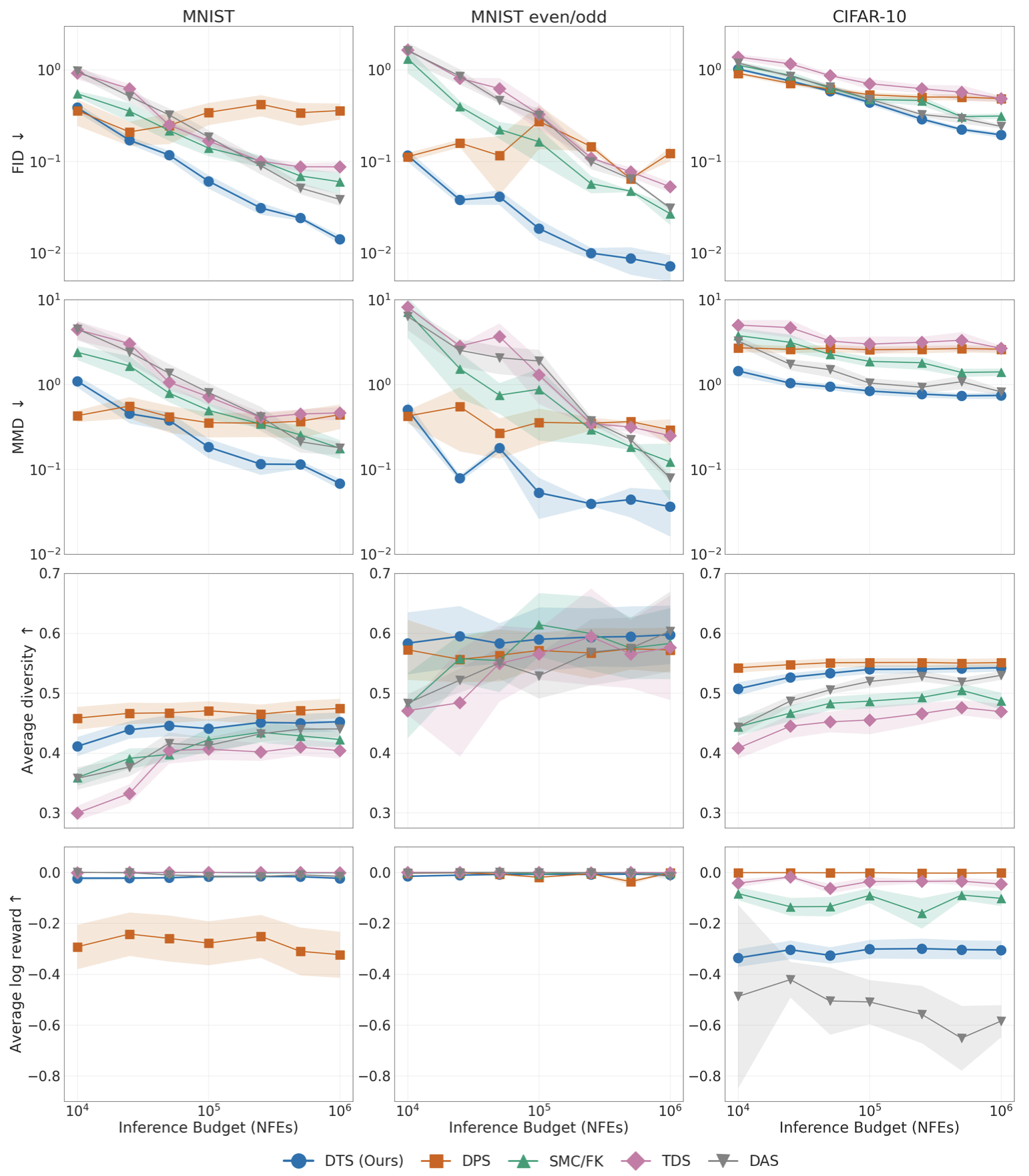}
    \label{fig:img_all_plots}
    \caption{Various distribution level metrics versus number of function evaluations for different methods on MNIST single digit generation averaged over all 10 digits (left), MNIST odd and even digit generation (center), and CIFAR-10 single class generation averaged over all 10 digits (right). All methods were evaluated with $5000$ generated samples per class. Metrics reported: FID (lower is better), CMMD (lower is better), Average log rewards (higher is better), and average diversity (higher is better).}
\end{figure}

\begin{figure}[h!]
    \centering
    \includegraphics[max size={\textwidth}{0.95\textheight}]{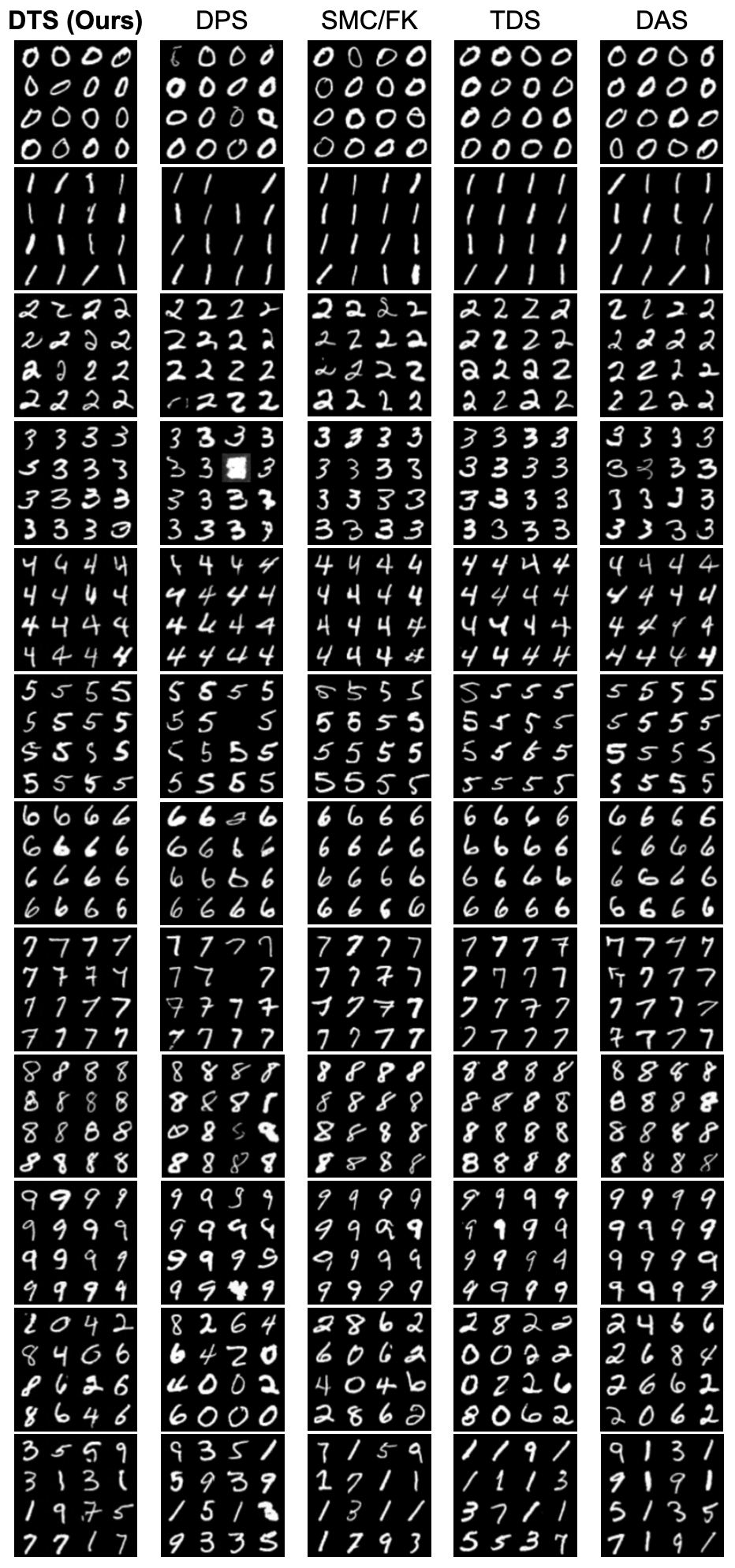}
    \caption{MNIST posterior samples generated using different methods for digits 0-9, even and odd.}
    \label{fig:mnist_all}
\end{figure}

\begin{figure}[h!]
    \centering
    \includegraphics[height=0.9\textheight]{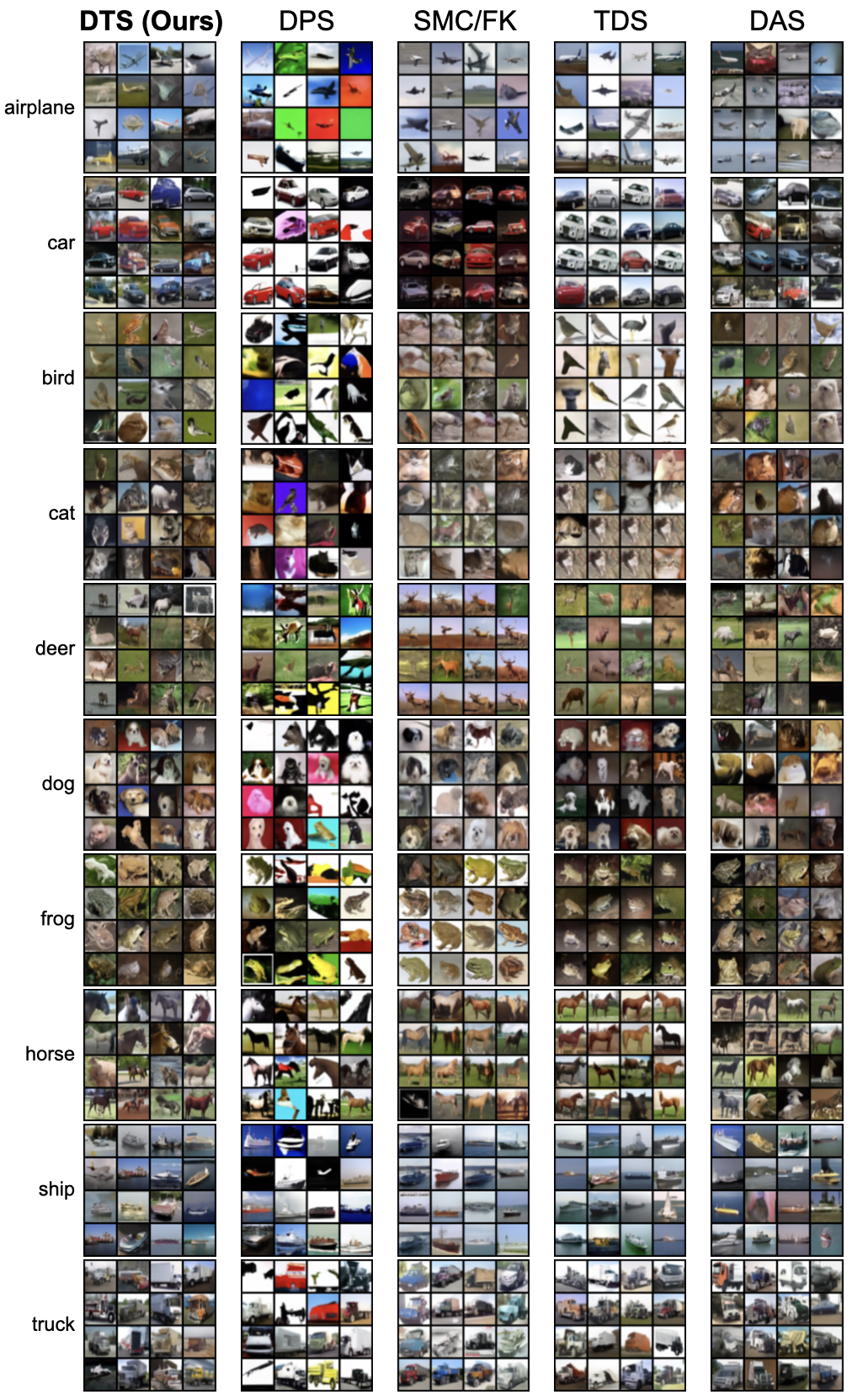}
    \caption{CIFAR-10 posterior samples generated using different methods for all classes.}
    \label{fig:cifar10_all}
\end{figure}

\clearpage
\subsection{Text-to-image examples}
\label{app:exp_sd}

We present more samples for qualitative analysis. \Cref{fig:sd_nfes} shows how samples change with increasing amount of inference-time compute, providing visual evidence for the quantitative results from \Cref{fig:sd_aesthetic}. \Crefrange{fig:sd_style}{fig:sd_count} shows text-image pairs testing different concepts such as artistic style, spatial arrangement and object count.

\begin{figure}[h!]
    \centering
    \includegraphics[width=\linewidth]{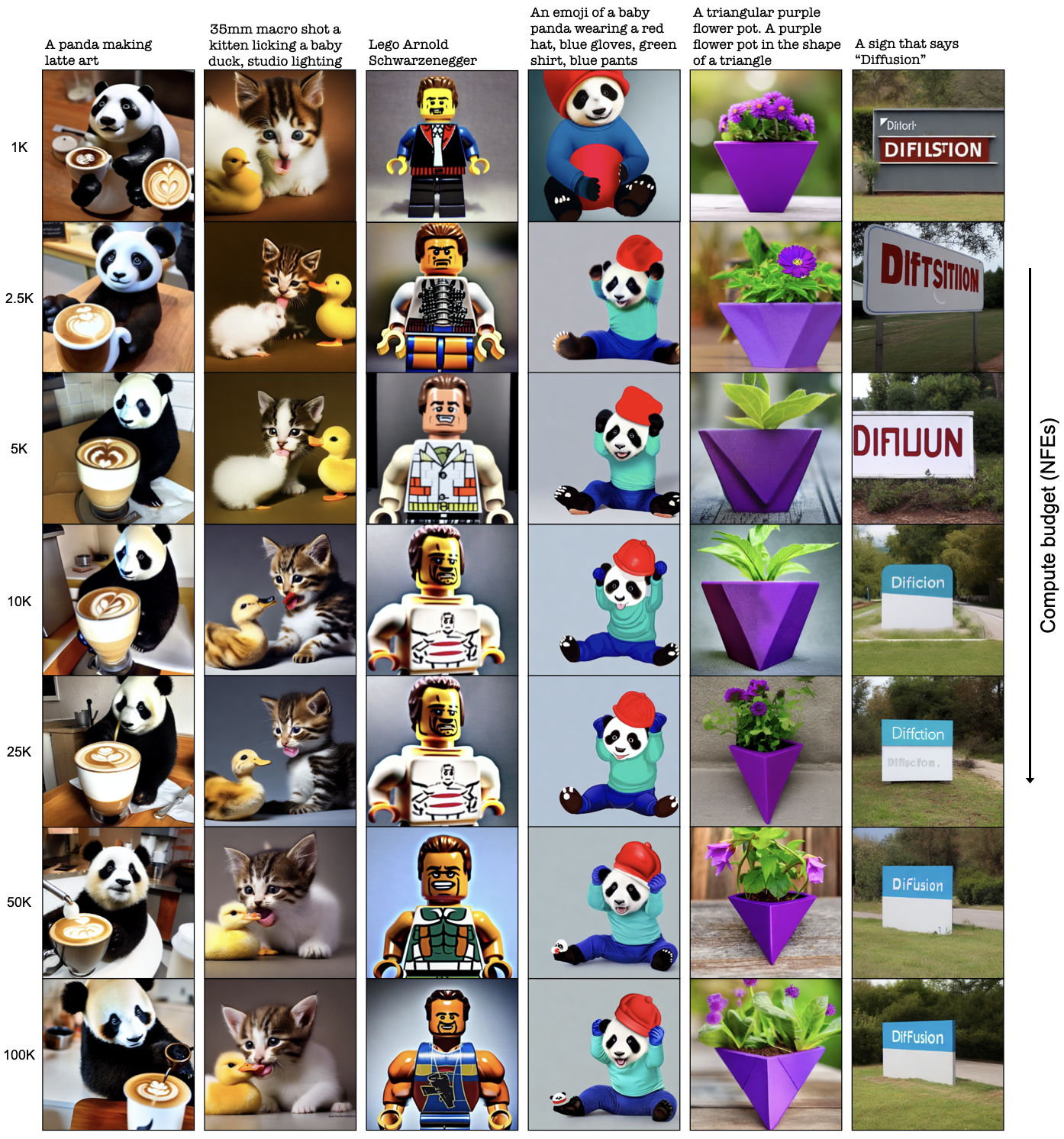}
    \caption{Text-image pairs from \Cref{fig:main_fig} with increasing amount of inference-time compute, measured in number of function evaluations (NFEs) of the diffusion model.}
    \label{fig:sd_nfes}
\end{figure}

\begin{figure}[h!]
    \centering
    \includegraphics[width=\linewidth]{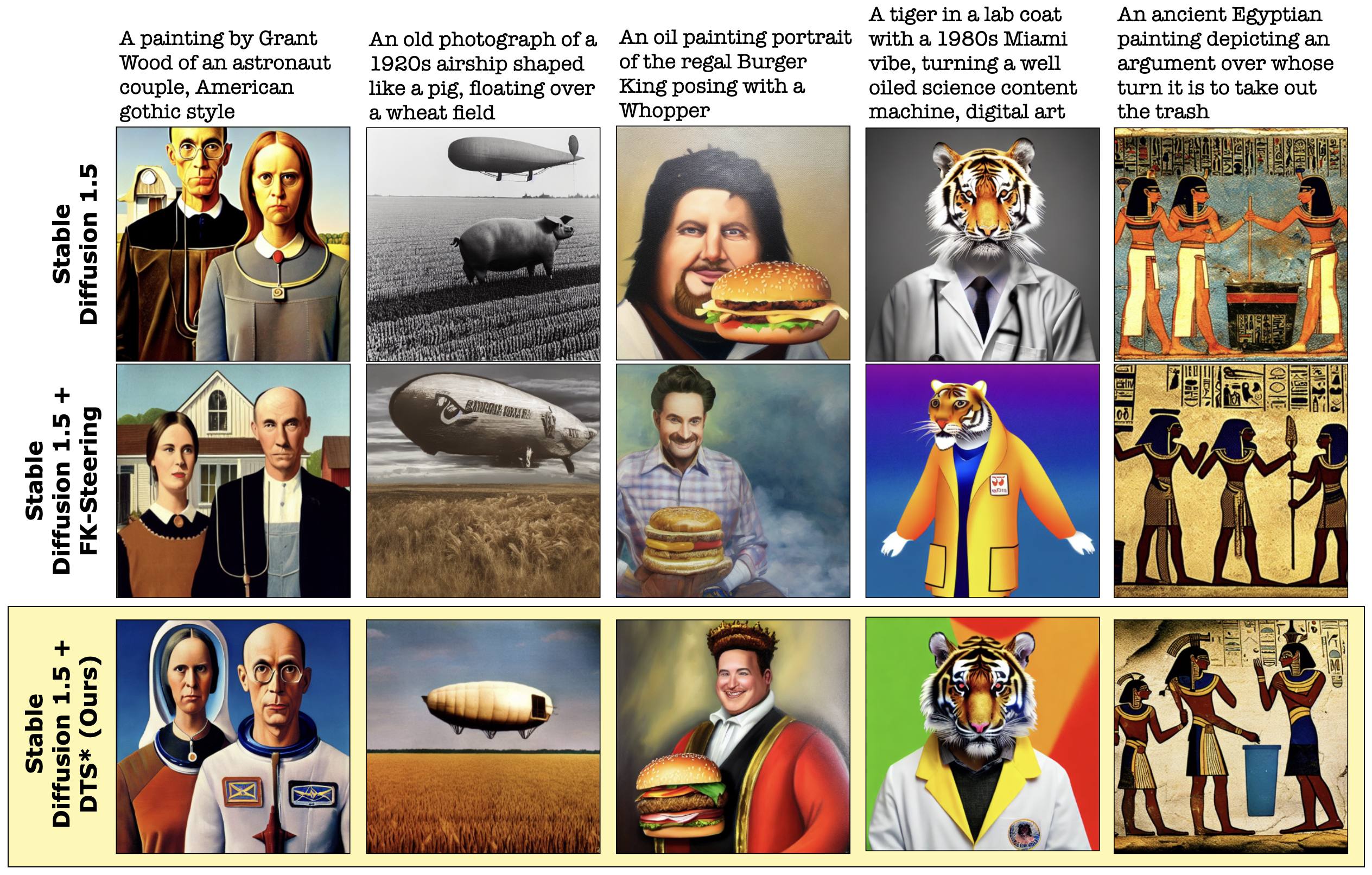}
    \caption{Sample text-image pairs using Stable Diffusion v1.5 and ImageReward as the guiding function for prompts requiring a specific artistic style. Samples are picked at random for each method and prompt.}
    \label{fig:sd_style}
\end{figure}

\begin{figure}[h!]
    \centering
    \includegraphics[width=\linewidth]{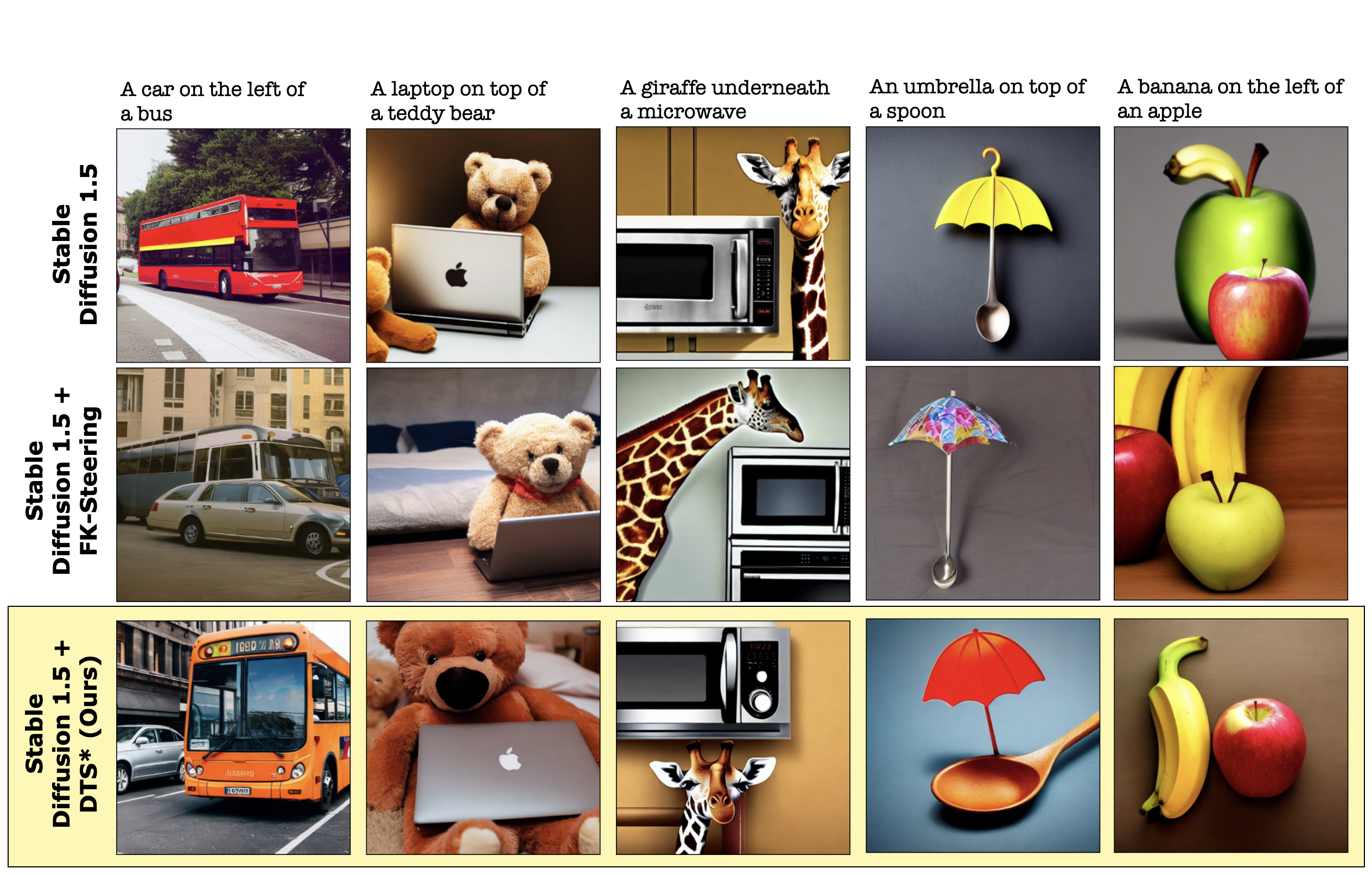}
    \caption{Sample text-image pairs using Stable Diffusion v1.5 and ImageReward as the guiding function for prompts requiring specific spatial relationships between objects. Samples are picked at random for each method and prompt.}
    \label{fig:sd_spatial}
\end{figure}

\begin{figure}[h!]
    \centering
    \includegraphics[width=\linewidth]{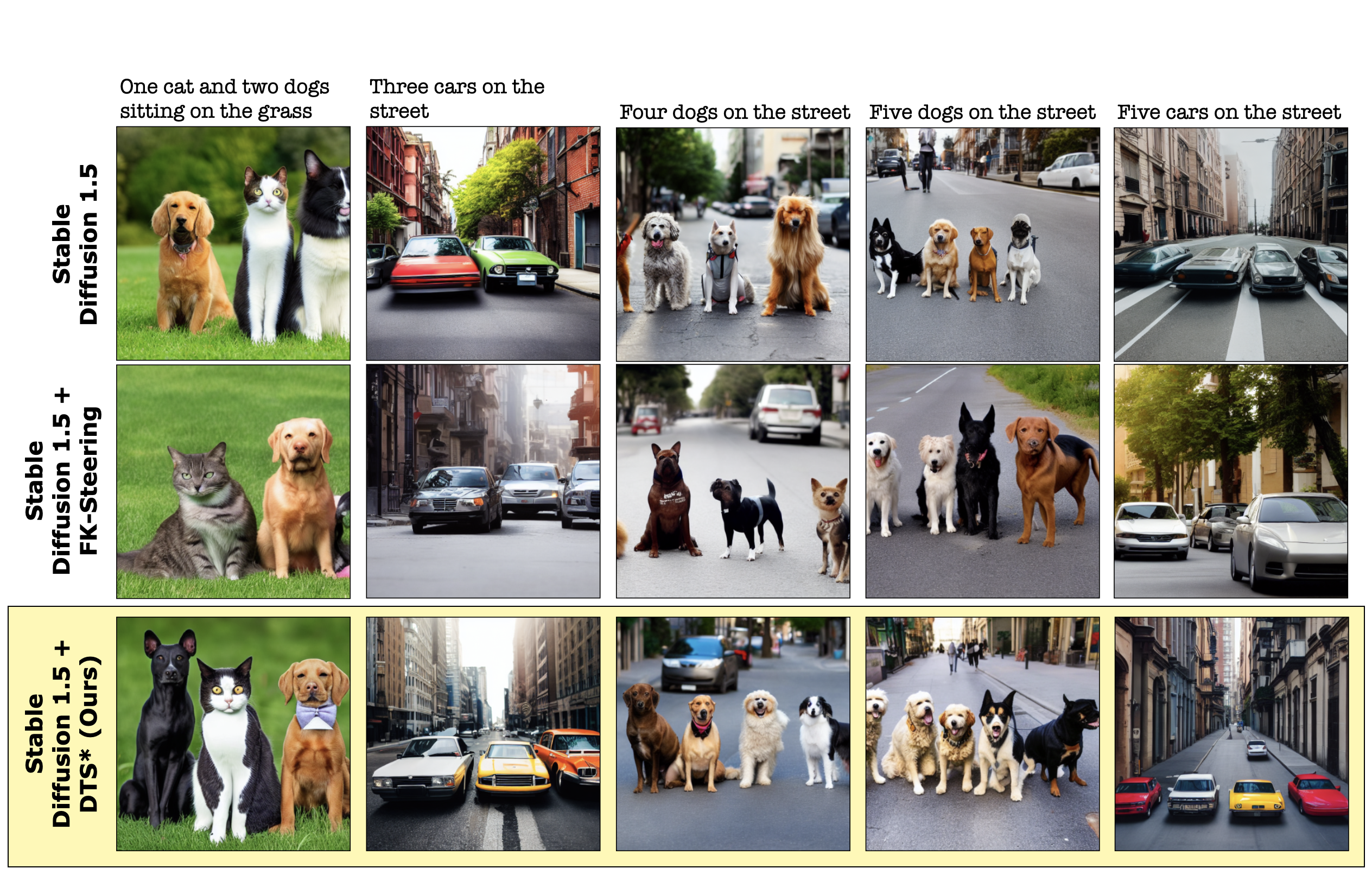}
    \caption{Sample text-image pairs using Stable Diffusion v1.5 and ImageReward as the guiding function for prompts requiring specific object counts. Samples are picked at random for each method and prompt.}
    \label{fig:sd_count}
\end{figure}

\newpage
\subsection{Text completion examples}
\label{app:exp_text}

We present additional text completions for the base MDLM model, FK-Steering and \dtse in \Cref{fig:mdlm_examples}.

\begin{figure}[h!]
    \centering
    \includegraphics[width=\linewidth]{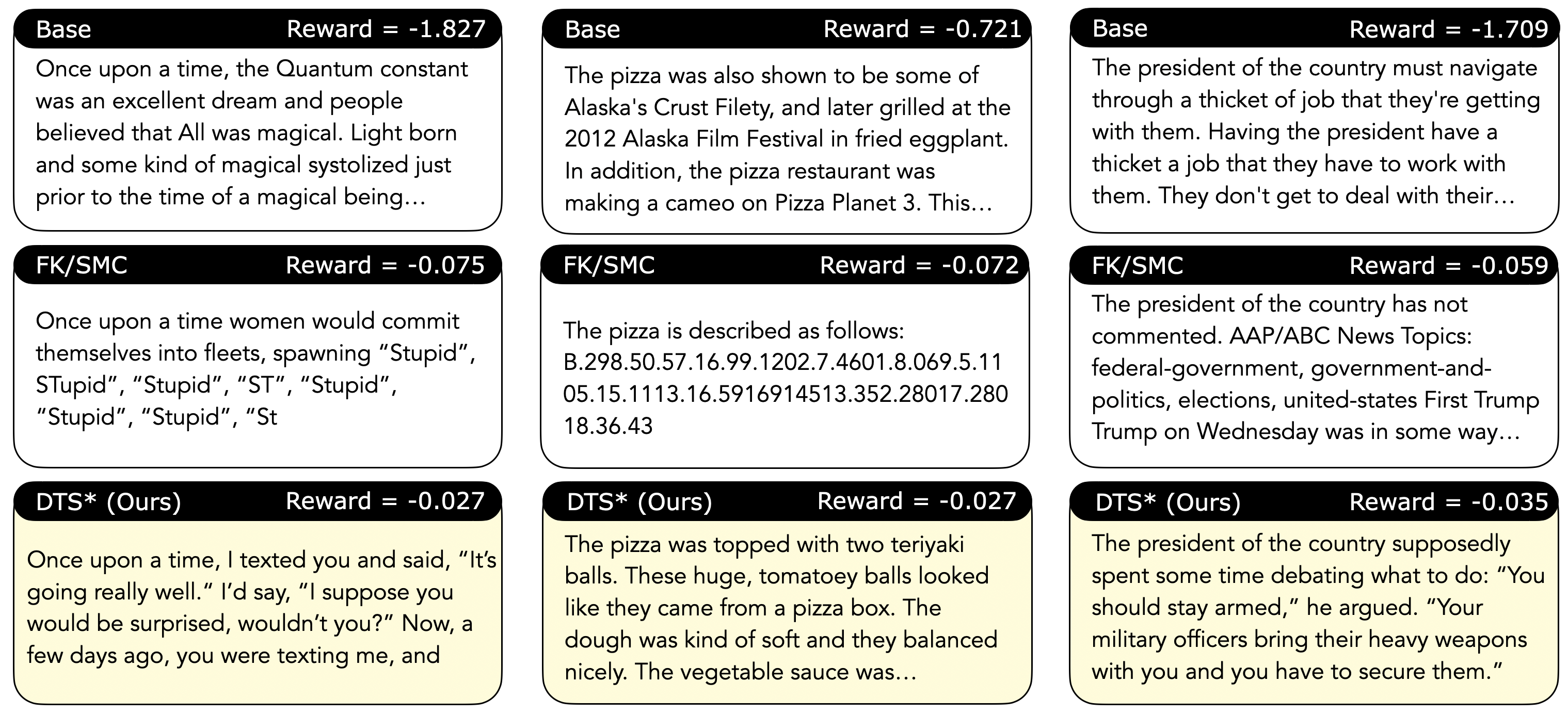}
    \caption{Sample text completions using MDLM and a CoLA classifier as reward. Samples are picked at random for each method and prompt.}
    \label{fig:mdlm_examples}
\end{figure}

\end{document}